\DeclareMathOperator*{\argmax}{arg\,max}
\begin{document}

\title{Sparse Representations Improve Adversarial Robustness of Neural Network Classifiers}

\author{\name Killian Steunou\thanks{Lead and corresponding author. Primary contributor.} \email killian.steunou@ens-paris-saclay.fr \\
        École Normale Supérieure Paris-Saclay \\
        4 Av. des Sciences, 91190 Gif-sur-Yvette, France
        \AND
       \name Th\'eo Druilhe \email theo.druilhe@ut-capitole.fr \\
       \name Sigurd Saue \email sigurd.saue@ut-capitole.fr \\
       Toulouse School of Economics \\ 
       21 allée de Brienne, 31000, Toulouse, France}

\editor{My editor}

\maketitle

\begin{abstract}
Deep neural networks perform remarkably well on image classification tasks but remain vulnerable to carefully crafted adversarial perturbations. This work revisits linear dimensionality reduction as a simple, data-adapted defense. We empirically compare standard Principal Component Analysis (PCA) with its sparse variant (SPCA) as front-end feature extractors for downstream classifiers, and we complement these experiments with a theoretical analysis. On the theory side, we derive exact robustness certificates for linear heads applied to SPCA features: for both $\ell_\infty$ and $\ell_2$ threat models (binary and multiclass), the certified radius grows as the dual norms of $W^\top u$ shrink, where $W$ is the projection and $u$ the head weights. We further show that for general (non-linear) heads, sparsity reduces operator-norm bounds through a Lipschitz composition argument, predicting lower input sensitivity. Empirically, with a small non-linear network after the projection, SPCA consistently degrades more gracefully than PCA under strong white-box and black-box attacks while maintaining competitive clean accuracy. Taken together, the theory identifies the mechanism (sparser projections reduce adversarial leverage) and the experiments verify that this benefit persists beyond the linear setting. Our code is available at \href{https://github.com/killian31/SPCARobustness}{https://github.com/killian31/SPCARobustness}.
\end{abstract}

\begin{keywords}
  adversarial robustness, sparse representations, neural network classifiers, robust feature learning, regularization
\end{keywords}

\section{Introduction}
\label{sec:intro}

Machine learning models, particularly deep neural networks, have demonstrated remarkable performance across various domains, from image classification to natural language processing. However, these models exhibit a concerning vulnerability to adversarial examples, carefully crafted perturbations that, while often imperceptible to humans, can cause classifiers to make incorrect predictions with high confidence. This vulnerability poses significant challenges for deploying machine learning systems in security-critical applications such as autonomous vehicles, medical diagnostics, and fraud detection. The widespread nature of this vulnerability across different model architectures has sparked extensive research into defense mechanisms. Proposed approaches include adversarial training, robust optimisation, regularisation techniques, and input transformation methods. Despite these efforts, the fundamental challenge remains: creating models that maintain high accuracy on clean data while demonstrating resilience against adversarial manipulations.

In this paper we explore dimensionality reduction as a defense and ask whether enforcing sparsity in the projection improves robustness. Specifically, we investigate Sparse Principal Component Analysis (SPCA) from \cite{spca} as a data-adapted linear front-end in place of standard PCA \citep{pca}. Our hypothesis builds on the view that adversarial examples exploit high-dimensional, non-robust features that correlate with labels in training data but fail to generalize under small perturbations. By restricting each component to depend on only a small subset of input dimensions, SPCA may filter out these non-robust directions and emphasise more stable structure.

Beyond empirical evaluation, we provide a theoretical account of why sparsity helps. We derive exact robustness certificates for linear heads operating on SPCA features. In the $\ell_\infty$ and $\ell_2$ settings, both binary and multiclass, the prediction is provably invariant whenever the clean margin exceeds a term proportional to $\|W^\top u\|_{1}$ or $\|W^\top u\|_{2}$, respectively, where $W$ is the projection and $u$ are head weights. Because SPCA drives many entries of $W$ to zero and shrinks its column norms, these dual norms decrease, enlarging the certified radius. For general non-linear heads, we bound the end-to-end sensitivity via a Lipschitz composition $\|C_\phi \circ W\|_{p\to 2} \le L_C \|W\|_{p\to 2}$, showing that sparsity in $W$ tightens operator-norm bounds and thus reduces the model’s worst-case input sensitivity.

The contributions of our work are as follows:
\begin{itemize}
    \item A systematic comparison between PCA and SPCA as feature extractors for classification under adversarial conditions on MNIST and a CIFAR-binary task, covering white-box ($\ell_\infty$, $\ell_2$) and black-box attacks.
    \item A theoretical analysis establishing exact robustness certificates for linear heads on top of SPCA features (binary and multiclass, $\ell_\infty$ and $\ell_2$), and Lipschitz-based sensitivity bounds for general non-linear heads.
    \item An empirical finding that the sparsity mechanism predicted by theory persists with a small non-linear network: SPCA-based classifiers degrade more gracefully than PCA-based ones while maintaining competitive clean accuracy.
    \item An examination of sparsity-robustness trade-offs, relating the degree of sparsity in $W$ to the resulting robustness profile.
\end{itemize}

Our results suggest that carefully designed sparse feature extraction can serve as an effective and interpretable defense mechanism against adversarial attacks, and that the mechanism has principled support: sparsity contracts the dual/operator norms that govern worst-case changes, thereby limiting adversarial leverage. We next situate our approach within related work before formalising the problem and presenting our theoretical analysis and experiments.

\section{Related Work}
\label{sec:related}

A large body of work investigates why modern classifiers are brittle and how to defend them. One perspective attributes adversarial vulnerability to the exploitation of non-robust features, i.e., high-dimensional but brittle correlations that drive accuracy on clean data while being easy to manipulate \citep{ilyas2019adversarialnotbugs}. This motivates defenses that either (i) project inputs onto lower-complexity representations to discard non-robust directions, or (ii) change the classifier family altogether (e.g., generative classifiers) so that decisions align with data likelihood. Our work follows the first route via linear dimensionality reduction with sparsity, and we position it relative to both lines below.

Prior defenses reduce the adversary’s search space by transforming inputs before classification. Representative examples include bit-depth reduction, JPEG compression, total-variation minimization, and image quilting \citep{guo2018countering}, as well as feature squeezing (bit-depth and spatial smoothing) used for detection \citep{xu2017featureSqueezing}. These methods share the spirit of dimensionality/complexity reduction but typically operate as non-learned image transforms. In contrast, we learn a data-adapted linear projection and ask whether imposing sparsity on this projection improves robustness over standard PCA.

High-dimensional models like deep neural networks are famously vulnerable to small adversarial perturbations. A common defense strategy is to reduce input dimensionality or project inputs onto a lower-dimensional “manifold” of normal data, with the goal of discarding adversarial noise. Prior work has shown promise for such approaches. For example, \cite{bhagoji2017enhancingrobustnessmachinelearning} explored data transformations including PCA as a preprocessing defense, reporting that projecting inputs onto a subspace of top principal components can significantly increase the effort required for a successful attack.  This supports the intuition that many adversarial perturbations lie in directions of input space that can be truncated or removed through dimensionality reduction.

Researchers have also analysed adversarial examples through the lens of principal components. \cite{hendrycks2017earlymethodsdetectingadversarial} observed that adversarial images exhibit different PCA spectra than genuine images. In particular, adversarial inputs tend to have abnormally large variance along some principal directions. \cite{jere2019principalcomponentpropertiesadversarial} extended this analysis across multiple architectures, showing that small adversarial perturbations consistently alter an input’s projection on principal components in characteristic ways. Such findings motivated the use of PCA as a defensive filter; for instance, \cite{bhagoji2017enhancingrobustnessmachinelearning} demonstrated that removing low-variance PCA components of input data can improve robustness against $\ell_2$ and $\ell_\infty$ norm attacks, forcing attackers to exert significantly more distortion to fool the classifier.

Sparse Principal Component Analysis is a variant of PCA that seeks a projection onto principal components with sparse support, i.e. each component is a linear combination of only a small subset of original features \citep{spca}.  By selecting a subset of important features, SPCA can yield more interpretable and possibly more robust representations, since spurious noise spread across many features might be excluded.  Despite SPCA’s popularity in high-dimensional data analysis, its use as a defense against adversarial attacks remains largely unexplored.  To our knowledge, no prior empirical study has focused on applying SPCA to improve classifier resistance to adversarial perturbations in image recognition tasks. Related work has predominantly centred on standard PCA or non-sparse transformations, as discussed above.

There are a few recent developments that hint at the potential of SPCA-like approaches for robustness. \cite{awasthi2020estimatingprincipalcomponentsadversarial} theoretically formulated a “robust PCA” objective for finding low-dimensional data representations that are maximally stable to bounded adversarial perturbations.  Interestingly, their formulation includes a sparse PCA criterion as a special case, and they devise algorithms to compute a robust subspace with provable guarantees.  A complementary theoretical analysis of PCA’s susceptibility to worst-case perturbations was given by \cite{li2020intheadversarialrobustnesspca}, who studied optimal modification strategies that maximize the distance between subspaces learned from clean versus perturbed data.  Their results provide formal insights into how adversaries can exploit the structure of the data matrix to destabilise PCA projections, especially when the adversary can manipulate the entire dataset. \cite{dorsi2020sparsepcaalgorithmsadversarial} likewise study the problem of identifying the top sparse principal components in a dataset that has been adversarially corrupted, providing the first algorithms with certified resilience to small adversarial changes in the input covariance.  These works are theoretical and focus on algorithmic recovery guarantees rather than end-to-end classifier defense, but they underscore the intuition that enforcing sparsity in feature projections can enhance robustness.

In comparison to previous PCA-based defenses, our study is novel in explicitly leveraging Sparse PCA as a defense mechanism for adversarial robustness in high-dimensional image classification.  By constraining principal components to be sparse, we aim to concentrate the model’s attention on a core subset of stable features, potentially pruning away the noisy directions adversaries exploit.  This approach bridges a gap in the literature: whereas earlier defenses either used dense PCA or non-linear autoencoders to reduce input dimensionality, we investigate whether a sparse linear projection can provide a favourable trade-off between preserving pertinent information and removing adversarial noise. In summary, our work builds on the idea of dimensionality reduction for robustness but introduces and evaluates SPCA in this context for the first time, showing how it compares to prior PCA-based and manifold-projection defenses.

Orthogonal to dimensionality-reduction defenses, \cite{li2019aregenerative} propose deep generative classifiers that model $p(x,y)$ and use likelihood for both prediction and attack detection. Their experiments cover MNIST and a CIFAR-binary task constructed from CIFAR-10 (airplane vs.\ frog), and report improved robustness against common attacks. Conceptually, their robustness arises from rejecting off-manifold inputs via low likelihood, whereas our approach aims to prevent vulnerability by discarding non-robust directions through a sparse linear projection learned from data. The two lines are complementary: SPCA can be used as a front-end to generative or discriminative models, and our evaluation protocol (MNIST/CIFAR-binary; FGSM/PGD/MIM; $\ell_\infty$ and $\ell_2$) mirrors \cite{li2019aregenerative} in spirit but targets a different mechanism.

The view that adversaries exploit non-robust features \citep{ilyas2019adversarialnotbugs} provides a rationale for favoring sparse representations that emphasise a small, stable subset of input dimensions. Robust-subspace formulations for PCA provide algorithmic backing for this intuition \citep{awasthi2020estimatingprincipalcomponentsadversarial}, while certified approaches like randomized smoothing \citep{cohen2019certifiedsmoothing} pursue orthogonal guarantees. Our findings suggest that enforcing sparsity already yields meaningful empirical robustness improvements and can potentially complement certified or generative defenses.

\section{Problem Statement}
\label{sec:problem_statement}

\paragraph{Data and Notation.}  Let $\{(x_i, y_i)\}_{i=1}^N$ be a dataset of $N$ samples, where each input $x_i \in \mathbb{R}^D$ and label $y_i \in \mathcal{Y}$, with $\mathcal{Y} = \{1, 2, \dots, K\}$. Our goal is to learn a function $h:\mathbb{R}^D \rightarrow \{1, \dots, K\}$ that predicts the correct label $y_i$ from $x_i$ while also being robust to adversarial perturbations.

\paragraph{Feature Extractor.}  We introduce a dimensionality-reduction map $\Phi: \mathbb{R}^D \rightarrow \mathbb{R}^r$, where $r < D$, that transforms an input $x \in \mathbb{R}^D$ into a lower‑dimensional representation $\Phi(x) \in \mathbb{R}^r$.  The matrix of parameters for this map is denoted by $W \in \mathbb{R}^{r \times D}$, and we optionally include a bias vector $b \in \mathbb{R}^r$. Hence,
$$
\Phi(x) \;=\; W\,x \;+\; b.
$$
In this work, $W$ (and possibly $b$) is computed by either Principal Component Analysis (PCA) or Sparse Principal Component Analysis (SPCA).

\subparagraph{PCA.}  Given a set of training inputs $\{x_i\}_{i=1}^N$, let
$$
S \;=\; \frac{1}{N}\sum_{i=1}^N (x_i - \bar{x})(x_i - \bar{x})^\top
$$
be the empirical covariance matrix, where $\bar{x}$ is the empirical mean of the data.  PCA seeks $W \in \mathbb{R}^{r \times D}$ that solves:
\begin{align*}
\max_{W}\; & \operatorname{trace}\bigl(W S W^\top \bigr) \\
\text{subject to}\; & W W^\top = I_r,
\end{align*}
where $I_r$ is the $r \times r$ identity matrix.  In practice, $W$ is chosen as the top $r$ eigenvectors of $S$, and $b$ is chosen so that the transform is mean‑centred.

\subparagraph{SPCA.}  Sparse Principal Component Analysis imposes additional sparsity constraints on $W$.  A common formulation is:
\begin{align*}
    \max_{W}\; & \operatorname{trace}\bigl(W S W^\top\bigr) \\
\text{subject to} \; & \|W_j\|_2 = 1\quad\text{for each row }W_j, \\
& \|W\|_{1} \leq \alpha,
\end{align*}

where $\|W\|_1$ denotes the sum of absolute values of all entries (or a row‑wise/grouped variant), and $\alpha>0$ is a tuning parameter controlling sparsity.  This constraint forces each principal component to have many coefficients set to or near zero, thereby increasing interpretability and potentially affecting robustness.

\paragraph{Classifier.}  Let $C_\phi: \mathbb{R}^r \rightarrow \{1, \dots, K\}$ be a parametric classifier (e.g., a neural network) with parameters $\phi$.  The end‑to‑end predictor $h$ is thus given by
$$
h(x)\;=\;C_\phi\bigl(\Phi(x)\bigr)
\;=\;
C_\phi\bigl(Wx + b\bigr).
$$
We train $h$ by minimising a classification loss (e.g., cross‑entropy) over the dataset:
$$
\min_{\phi}\,\frac{1}{N}\sum_{i=1}^N \ell\Bigl(C_\phi\bigl(\Phi(x_i)\bigr),\,y_i\Bigr),
$$
where $\ell(\cdot,\cdot)$ is an appropriate loss function.

\paragraph{Adversarial Attack.}  An adversarial example is a perturbed input $x'_i$ such that $x'_i$ is close to $x_i$ in some metric but leads the classifier $h$ to an incorrect prediction.  Formally, a threat model specifies a norm‑based constraint, such as $\|x'_i - x_i\|_\infty \le \varepsilon$, and an adversary seeks to solve:
$$
x'_i \;=\;\argmax_{x'\text{ s.t. }\,\|x' - x_i\|_\infty \le \varepsilon} \ell\Bigl(h(x'),\;y_i\Bigr).
$$
The adversarial robustness of $h$ at perturbation level $\varepsilon$ is then measured by the accuracy of $h$ on the set of adversarial examples $\{x'_i\}_{i=1}^N$.

\paragraph{Objective.}  Our aim is to compare the robustness of two classifiers: one built upon the PCA‑based feature extractor $W_{\mathrm{PCA}}$, and another on the SPCA‑based extractor $W_{\mathrm{SPCA}}$. Specifically, we wish to investigate whether enforcing sparsity in the linear transformation $\Phi$ leads to improved adversarial robustness. Hence, we study $\mathrm{Robustness}(h_{\mathrm{PCA}}, \varepsilon)$
vs.
$\mathrm{Robustness}(h_{\mathrm{SPCA}}, \varepsilon)$, for various perturbation magnitudes $\varepsilon$, where
\begin{align*}
    h_{\mathrm{PCA}}(x) &= C_\phi\bigl(W_{\mathrm{PCA}}\,x + b_{\mathrm{PCA}}\bigr), \\
    h_{\mathrm{SPCA}}(x) &= C_{\phi'}\bigl(W_{\mathrm{SPCA}}\,x + b_{\mathrm{SPCA}}\bigr).
\end{align*}

In the subsequent sections we provide theoretical and empirical evidence that sparsity in the feature extraction matrix $W$ can mitigate the effects of adversarial perturbations, thereby yielding more robust classifiers.

\section{Theoretical Analysis}

The empirical results presented later in this paper demonstrate that Sparse
Principal Component Analysis (SPCA) enhances robustness compared to standard
PCA. In this section we provide formal guarantees that support this observation.
We begin by fixing notation and the threat model, then derive exact robustness
certificates for linear heads, extend them to multiclass settings, and provide
Lipschitz-based bounds for general non-linear heads. We also show explicitly how
sparsity in the projection matrix improves the certificates.

\subsection{Setup and Notation}

Let $W \in \mathbb{R}^{r \times D}$ denote the projection matrix learned by PCA
or SPCA, and $z = Wx \in \mathbb{R}^r$ the reduced representation of an input
$x \in \mathbb{R}^D$. We write $\|\cdot\|_p$ for the $\ell_p$ norm and
$\|\cdot\|_{p \to q}$ for the operator norm induced by $\ell_p$ on the domain and
$\ell_q$ on the codomain.

We consider additive adversaries that may perturb an input within a norm ball:
for a radius $\varepsilon > 0$, the adversary can replace $x$ by $x+\delta$ with
$\|\delta\|_p \le \varepsilon$, where $p \in \{2,\infty\}$ throughout this section.

\subsection{Certified Robustness for Binary Linear Heads}

Consider a binary linear classifier on top of the projection,
\[
f(x) \;=\; \mathrm{sign}\!\big(u^\top z + b\big)
\;=\; \mathrm{sign}\!\big(u^\top W x + b\big),
\]
where $u \in \mathbb{R}^r$ and $b \in \mathbb{R}$. For a labeled example
$(x,y)$ with $y \in \{-1,1\}$, define the (signed) margin
$m(x) = y\,(u^\top W x + b)$.

\begin{theorem}[Exact $\ell_\infty$ robustness certificate]
\label{thm:binary-linf}
If $m(x) > \varepsilon\,\|W^\top u\|_1$, then for all perturbations
$\delta$ with $\|\delta\|_\infty \le \varepsilon$ the prediction is invariant:
$f(x+\delta) = f(x)$.
\end{theorem}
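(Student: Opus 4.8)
The plan is to control how far the signed margin $m(\cdot)$ can move under an admissible perturbation and then to show the hypothesis forces it to stay strictly positive, so the sign of the head output—and hence the prediction—cannot flip. First I would expand the perturbed margin using the affine structure of the head: since $m(x) = y\,(u^\top W x + b)$ and $x \mapsto u^\top W x$ is linear,
\[
m(x+\delta) = y\big(u^\top W(x+\delta) + b\big) = m(x) + y\,(W^\top u)^\top \delta,
\]
where I rewrite $u^\top W \delta = (W^\top u)^\top \delta$ precisely to expose the vector $W^\top u$ that governs the certificate.

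The key step is to bound the perturbation term $\big|y\,(W^\top u)^\top \delta\big|$. Because $y \in \{-1,1\}$ this equals $\big|(W^\top u)^\top \delta\big|$, and I would invoke the Hölder (dual-norm) inequality for the conjugate pair $(\ell_1,\ell_\infty)$ together with the admissibility constraint $\|\delta\|_\infty \le \varepsilon$:
\[
\big|(W^\top u)^\top \delta\big| \;\le\; \|W^\top u\|_1\,\|\delta\|_\infty \;\le\; \varepsilon\,\|W^\top u\|_1.
\]
This is the sole analytic ingredient, and the appearance of the $\ell_1$ norm is dictated exactly by the $\ell_\infty$ threat model through duality; recognizing this correspondence is the one conceptual point, but it is standard rather than a genuine obstacle.

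Combining the two displays yields the worst-case lower bound $m(x+\delta) \ge m(x) - \varepsilon\,\|W^\top u\|_1$, valid uniformly over $\|\delta\|_\infty \le \varepsilon$. Under the hypothesis $m(x) > \varepsilon\,\|W^\top u\|_1$ the right-hand side is strictly positive, so $m(x+\delta) > 0$ for every admissible $\delta$. It then remains to translate a positive margin into an invariant label: since $m(x) > 0$ already forces $f(x) = \mathrm{sign}(u^\top W x + b) = y$, and likewise $m(x+\delta) > 0$ forces $f(x+\delta) = y$, I conclude $f(x+\delta) = f(x)$. I expect no serious difficulty here—the argument is a one-line Hölder estimate—so the only care required is to dispose of the $y \in \{-1,1\}$ factor cleanly and to observe that the same hypothesis simultaneously certifies that the clean point $x$ is correctly classified with margin sign $y$.
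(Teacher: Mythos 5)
Your proof is correct and follows essentially the same route as the paper's: the Hölder bound $|(W^\top u)^\top \delta| \le \|W^\top u\|_1\,\|\delta\|_\infty$ is exactly the $\ell_\infty$--$\ell_1$ duality step the paper invokes, and the conclusion that the perturbed logit stays on the side of $y$ is the same. Your version merely spells out the final translation from $m(x+\delta)>0$ to $f(x+\delta)=f(x)=y$, which the paper leaves implicit.
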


\begin{proof}
For any such $\delta$,
\(
\big|u^\top W \delta\big|
\le \varepsilon\,\|W^\top u\|_1
\)
by $\ell_\infty$-$\ell_1$ duality. Hence the logit $y(u^\top W (x+\delta)+b)$
remains positive if $m(x)>\varepsilon\|W^\top u\|_1$.
\end{proof}

\begin{theorem}[Exact $\ell_2$ robustness certificate]
\label{thm:binary-l2}
If $m(x) > \varepsilon\,\|W^\top u\|_2$, then for all perturbations
$\delta$ with $\|\delta\|_2 \le \varepsilon$ the prediction is invariant:
$f(x+\delta) = f(x)$.
\end{theorem}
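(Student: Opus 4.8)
The plan is to mirror the $\ell_\infty$ proof almost verbatim, swapping the $\ell_\infty$--$\ell_1$ duality for the $\ell_2$ self-duality. The key observation is that the logit changes by exactly $u^\top W \delta$ under the perturbation $x \mapsto x + \delta$, and this quantity can be rewritten as $(W^\top u)^\top \delta$, an inner product in $\mathbb{R}^D$ between the fixed vector $W^\top u$ and the adversarial displacement $\delta$.

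First I would bound $|u^\top W \delta| = |(W^\top u)^\top \delta|$ using the Cauchy--Schwarz inequality, which gives $|(W^\top u)^\top \delta| \le \|W^\top u\|_2 \, \|\delta\|_2$. Since the threat model constrains $\|\delta\|_2 \le \varepsilon$, this immediately yields $|u^\top W \delta| \le \varepsilon \, \|W^\top u\|_2$. This is the exact analogue of the dual-norm bound in Theorem~\ref{thm:binary-linf}, except that the dual of the $\ell_2$ norm is again the $\ell_2$ norm rather than $\ell_1$.

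The final step is to transfer this bound to the margin. The perturbed margin is
\[
m(x+\delta) = y\big(u^\top W(x+\delta) + b\big) = m(x) + y\,u^\top W \delta \ge m(x) - |u^\top W \delta| \ge m(x) - \varepsilon\,\|W^\top u\|_2.
\]
Hence if $m(x) > \varepsilon\,\|W^\top u\|_2$, the perturbed margin stays strictly positive, so $\mathrm{sign}(u^\top W(x+\delta) + b) = y = f(x)$, proving invariance.

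There is essentially no obstacle here: the argument is a direct application of Cauchy--Schwarz, and the only point worth stating carefully is that the bound is \emph{tight}, justifying the ``exact'' in the theorem name. Tightness follows because the worst-case displacement $\delta = -y\,\varepsilon\,W^\top u / \|W^\top u\|_2$ (assuming $W^\top u \neq 0$) saturates Cauchy--Schwarz and drives the margin down by exactly $\varepsilon\,\|W^\top u\|_2$, so no strictly larger certified radius is possible with this linear head.
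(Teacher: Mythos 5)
Your proof is correct and follows the same route as the paper: a single application of Cauchy--Schwarz to bound $|u^\top W\delta| = |(W^\top u)^\top \delta| \le \varepsilon\,\|W^\top u\|_2$, from which the margin stays positive. The explicit margin-transfer step and the tightness remark (saturating $\delta \propto -y\,W^\top u$) are details the paper leaves implicit, but they are welcome additions rather than deviations.
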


\begin{proof}
By Cauchy-Schwarz, $\big|u^\top W \delta\big| \le \|W^\top u\|_2\,\|\delta\|_2
\le \varepsilon\,\|W^\top u\|_2$.
\end{proof}

\paragraph{Certified radii.}
Theorems~\ref{thm:binary-linf} and~\ref{thm:binary-l2} yield per-example
certified radii
\[
\varepsilon^\star_{\infty}(x)
= \frac{m(x)}{\|W^\top u\|_1},
\qquad
\varepsilon^\star_{2}(x)
= \frac{m(x)}{\|W^\top u\|_2}.
\]
Smaller dual norms of $W^\top u$ lead to larger certified radii.

\subsection{Multiclass Linear Heads}

Let $U = [u_1,\dots,u_K] \in \mathbb{R}^{r \times K}$ be class weight vectors and
$b_1,\ldots,b_K \in \mathbb{R}$. The predicted class is
\(
f(x) = \arg\max_{k} \{ u_k^\top W x + b_k \}.
\)
Let $k^\star$ denote the argmax for the clean input.

\begin{theorem}[Multiclass $\ell_\infty$ certificate]
\label{thm:multi-linf}
For each $k \neq k^\star$, define the pairwise margin
\(
\gamma_k(x) = (u_{k^\star}-u_k)^\top W x + (b_{k^\star}-b_k).
\)
If
\(
\gamma_k(x) > \varepsilon\,\|W^\top (u_{k^\star}-u_k)\|_1
\)
for all $k \neq k^\star$, then $f(x+\delta)=k^\star$ for all
$\delta$ with $\|\delta\|_\infty \le \varepsilon$.
\end{theorem}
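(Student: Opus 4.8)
The plan is to reduce the multiclass argmax to a family of pairwise binary comparisons and then reuse the same $\ell_\infty$--$\ell_1$ duality estimate that powers Theorem~\ref{thm:binary-linf}. First I would fix an arbitrary perturbation $\delta$ with $\|\delta\|_\infty \le \varepsilon$ and, for each competitor $k \neq k^\star$, split the perturbed pairwise gap by linearity:
\[
(u_{k^\star}-u_k)^\top W(x+\delta) + (b_{k^\star}-b_k)
= \gamma_k(x) + (u_{k^\star}-u_k)^\top W\delta .
\]
Thus the whole task is to control the single extra term $(u_{k^\star}-u_k)^\top W\delta$ for every $k$.

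Next I would bound that term by rewriting it as an inner product and applying H\"older's inequality:
\[
\big|(u_{k^\star}-u_k)^\top W\delta\big|
= \big|\langle W^\top(u_{k^\star}-u_k),\,\delta\rangle\big|
\le \|W^\top(u_{k^\star}-u_k)\|_1\,\|\delta\|_\infty
\le \varepsilon\,\|W^\top(u_{k^\star}-u_k)\|_1 .
\]
Substituting back into the splitting identity shows the perturbed gap is at least $\gamma_k(x) - \varepsilon\|W^\top(u_{k^\star}-u_k)\|_1$, which the hypothesis makes \emph{strictly} positive.

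Finally I would assemble these estimates across competitors. The bound above holds for every $k \neq k^\star$ using the \emph{same} $\delta$, so it amounts to a union over the $K-1$ pairwise constraints imposed by the hypothesis. Consequently the score of $k^\star$ at $x+\delta$ strictly exceeds the score of every other class; the strictness is precisely what rules out ties and forces $k^\star$ to remain the \emph{unique} maximizer, giving $f(x+\delta)=k^\star$.

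I expect no genuine obstacle here: the statement is a routine vectorization of the binary $\ell_\infty$ certificate, and the only point requiring a word of care is that the duality bound must be applied simultaneously to all competitors. This is automatic because each pairwise direction $W^\top(u_{k^\star}-u_k)$ carries its own margin condition in the hypothesis, so no single uniform constant is needed and the per-class form stated is already tight. (An equivalent but looser phrasing would replace each $\|W^\top(u_{k^\star}-u_k)\|_1$ by the maximum over $k$, but I would keep the sharper per-class version.)
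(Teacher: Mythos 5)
Your proposal is correct and follows essentially the same route as the paper's proof: decompose each pairwise logit difference by linearity, bound the perturbation term $(u_{k^\star}-u_k)^\top W\delta$ via $\ell_\infty$--$\ell_1$ duality (H\"older), and conclude that all pairwise gaps stay strictly positive so $k^\star$ remains the argmax. Your write-up merely makes explicit the steps the paper states tersely, including the remark on strictness ruling out ties.
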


\begin{proof}
For each competitor $k$, the perturbed logit difference equals
\(
(u_{k^\star}-u_k)^\top W (x+\delta) + (b_{k^\star}-b_k)
\),
whose change due to $\delta$ is bounded in magnitude by
$\varepsilon \|W^\top (u_{k^\star}-u_k)\|_1$ by duality. If the clean margin
exceeds this bound for all $k$, all pairwise differences remain positive.
\end{proof}

\begin{theorem}[Multiclass $\ell_2$ certificate]
\label{thm:multi-l2}
If
\(
\gamma_k(x) > \varepsilon\,\|W^\top (u_{k^\star}-u_k)\|_2
\)
for all $k \neq k^\star$, then $f(x+\delta)=k^\star$ for all
$\delta$ with $\|\delta\|_2 \le \varepsilon$.
\end{theorem}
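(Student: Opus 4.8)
The plan is to mirror the argument of Theorem~\ref{thm:multi-linf}, replacing the $\ell_\infty$–$\ell_1$ duality bound with the Cauchy–Schwarz inequality appropriate to the $\ell_2$ threat model. The key observation is that preserving the clean argmax $k^\star$ is equivalent to keeping every pairwise logit difference positive, and each such difference decomposes into a clean part plus a perturbation-dependent part that I can control uniformly in $\delta$.

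First I would fix an arbitrary competitor $k \neq k^\star$ and write the pairwise logit difference at the perturbed point as $(u_{k^\star}-u_k)^\top W(x+\delta) + (b_{k^\star}-b_k) = \gamma_k(x) + (u_{k^\star}-u_k)^\top W \delta$. Next I would isolate the perturbation term and rewrite it as the inner product $\langle W^\top(u_{k^\star}-u_k),\, \delta\rangle$, so that its magnitude is amenable to a standard inner-product bound. Applying Cauchy–Schwarz gives $\big|(u_{k^\star}-u_k)^\top W \delta\big| \le \|W^\top(u_{k^\star}-u_k)\|_2 \,\|\delta\|_2$, and invoking the threat-model constraint $\|\delta\|_2 \le \varepsilon$ yields the uniform bound $\varepsilon\,\|W^\top(u_{k^\star}-u_k)\|_2$.

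Combining these, the perturbed pairwise margin is at least $\gamma_k(x) - \varepsilon\,\|W^\top(u_{k^\star}-u_k)\|_2$, which is strictly positive precisely under the theorem's hypothesis. Since $k$ was arbitrary, all pairwise differences against $k^\star$ remain positive simultaneously for every admissible $\delta$, so $k^\star$ stays the maximizer and $f(x+\delta)=k^\star$.

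I do not anticipate a genuine obstacle: the only point requiring care is that the certificate must hold uniformly over all competitors $k$, which is exactly why the hypothesis is posed as a simultaneous condition over $k \neq k^\star$ rather than for a single worst-case class. A secondary subtlety, should one wish to make the role of $\delta$ fully explicit, is that the worst-case perturbation direction differs across competitors (each aligning with its own $W^\top(u_{k^\star}-u_k)$); but because we bound each pairwise term by its own dual norm and then impose all constraints jointly, no single $\delta$ need be simultaneously optimal for every class, and the conclusion follows.
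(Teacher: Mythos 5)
Your proposal is correct and follows exactly the paper's argument: decompose each pairwise logit difference into the clean margin $\gamma_k(x)$ plus $(u_{k^\star}-u_k)^\top W\delta$, bound the latter via Cauchy--Schwarz by $\varepsilon\,\|W^\top(u_{k^\star}-u_k)\|_2$, and conclude that all pairwise differences stay positive. The paper states this tersely (deferring to Theorems~\ref{thm:binary-l2} and~\ref{thm:multi-linf}); your write-up simply makes the same steps explicit, including the correct observation that no single worst-case $\delta$ needs to be optimal across competitors.
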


\begin{proof}
As in Theorem~\ref{thm:binary-l2}, by Cauchy-Schwarz the change in each pairwise
logit difference is at most $\varepsilon\,\|W^\top (u_{k^\star}-u_k)\|_2$.
\end{proof}

\subsection{General Lipschitz Heads and Operator-Norm Bounds}

Let $C_\phi : \mathbb{R}^r \to \mathbb{R}^K$ denote a (possibly non-linear)
classifier with Lipschitz constant $L_C$ under $\ell_2$, i.e.,
$\|C_\phi(z)-C_\phi(z')\|_2 \le L_C \|z-z'\|_2$ for all $z,z'$. Then the
composition $x \mapsto C_\phi(Wx)$ has Lipschitz constant
\begin{equation}
\label{eq:lip-compose}
\|C_\phi \circ W\|_{p \to 2}
\;\le\;
L_C\,\|W\|_{p \to 2}.
\end{equation}
In particular,
\[
\|W\|_{\infty \to 2}
\;\le\;
\sum_{j=1}^D \|w_j\|_2
\quad\text{and}\quad
\|W\|_{\infty \to 2}
\;\le\;
\sqrt{D}\,\|W\|_{2 \to 2},
\]
where $w_j$ is the $j$-th column of $W$. Moreover,
\(
\max_j \|w_j\|_2 \le \|W\|_{\infty \to 2}
\),
since taking an $\ell_\infty$-unit vector supported on coordinate $j$ yields
$W e_j = w_j$. Thus the $\ell_\infty\!\to\!\ell_2$ operator norm's value is
between the maximum and the sum of column norms and is also controlled by
$\|W\|_{2\to 2}$. Any structural constraint that reduces column norms or spectral
norm therefore tightens the worst-case sensitivity bound in
\eqref{eq:lip-compose}.

\subsection{How Sparsity in \texorpdfstring{$W$}{W} Improves Certificates}

We now quantify how sparsity decreases the dual norms that appear in the
certificates.

\begin{lemma}[Dual-norm control via column norms]
\label{lem:dual-col}
For any $u \in \mathbb{R}^r$,\\
$\|W^\top u\|_1 \le \|u\|_2 \sum_{j=1}^D \|w_j\|_2$
and
$\|W^\top u\|_2 \le \|u\|_2\,\|W\|_{2\to 2}.$
\end{lemma}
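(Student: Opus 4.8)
The plan is to reduce both inequalities to elementary norm facts, applied coordinatewise for the first and operator-wise for the second. The key preliminary observation is that the $j$-th coordinate of $W^\top u \in \mathbb{R}^D$ is exactly the inner product of $u$ with the $j$-th column of $W$: since the $j$-th row of $W^\top$ is $w_j^\top$, we have $(W^\top u)_j = w_j^\top u = \langle w_j, u\rangle$, where $w_1,\dots,w_D \in \mathbb{R}^r$ are the columns of $W$. With this identity in hand, each bound follows from a standard duality or operator-norm argument.

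For the first inequality, I would expand the $\ell_1$ norm as $\|W^\top u\|_1 = \sum_{j=1}^D |\langle w_j, u\rangle|$ and apply Cauchy--Schwarz to each summand, giving $|\langle w_j, u\rangle| \le \|w_j\|_2\,\|u\|_2$. Summing over $j$ and factoring out the common $\|u\|_2$ then yields $\|W^\top u\|_1 \le \|u\|_2 \sum_{j=1}^D \|w_j\|_2$, exactly as claimed.

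For the second inequality, I would invoke the definition of the induced operator norm directly: $\|W^\top u\|_2 \le \|W^\top\|_{2\to 2}\,\|u\|_2$, since $\|\cdot\|_{2\to 2}$ is by definition the supremum of $\|W^\top v\|_2 / \|v\|_2$ over nonzero $v$. It then remains to observe that $\|W^\top\|_{2\to 2} = \|W\|_{2\to 2}$, which holds because the spectral norm equals the largest singular value and the singular values of $W$ and $W^\top$ coincide. Substituting gives $\|W^\top u\|_2 \le \|u\|_2\,\|W\|_{2\to 2}$.

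Neither step presents a genuine obstacle, as this lemma is essentially a packaging of Cauchy--Schwarz and the definition of the spectral norm. The only point I would flag as warranting a word of justification rather than silent use is the transpose-invariance $\|W^\top\|_{2\to 2} = \|W\|_{2\to 2}$, which I would cite as a standard property of the spectral norm instead of re-deriving it from the singular value decomposition.
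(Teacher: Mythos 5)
Your proof is correct and follows essentially the same route as the paper's: coordinatewise Cauchy--Schwarz for the $\ell_1$ bound, and the induced operator norm for the $\ell_2$ bound. If anything, you are slightly more careful than the paper, which says the $\ell_2$ bound is ``the definition of the operator norm $\|W\|_{2\to 2}$ applied to $u$'' and thereby silently elides the transpose-invariance $\|W^\top\|_{2\to 2} = \|W\|_{2\to 2}$ that you rightly flag and justify.
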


\begin{proof}
Write $(W^\top u)_j = u^\top w_j$. Then
$\|W^\top u\|_1
= \sum_{j=1}^D |u^\top w_j|
\le \sum_{j=1}^D \|u\|_2\,\|w_j\|_2
= \|u\|_2 \sum_{j=1}^D \|w_j\|_2$
by Cauchy-Schwarz. The $\ell_2$ bound is the definition of the operator norm
$\|W\|_{2\to 2}$ applied to $u$.
\end{proof}

Lemma~\ref{lem:dual-col} shows that making many entries of $W$ exactly zero
(SPCA) reduces the column $\ell_2$ norms and thus decreases both
$\|W^\top u\|_1$ and $\|W^\top u\|_2$. Consequently, the certified radii in
Theorems~\ref{thm:binary-linf}-\ref{thm:multi-l2} increase. This formalizes the
intuition that sparsity removes vulnerable directions that dense PCA preserves.

\paragraph{Practical computation.}
For linear heads, the per-example certified radii
$\varepsilon^\star_\infty(x)$ and $\varepsilon^\star_2(x)$ are computable by a
single forward pass to obtain $m(x)$ and a matrix-vector product to evaluate
$\|W^\top u\|_1$ or $\|W^\top u\|_2$. For multiclass models, one evaluates the
pairwise margins and dual norms in Theorems~\ref{thm:multi-linf}-\ref{thm:multi-l2}.
Reporting the fraction of test points certified at each radius (``certified
accuracy'') provides a direct, attack-independent robustness comparison between
PCA and SPCA.

\paragraph{Scope and connection to practice.}
Theorems~1--4 provide exact robustness certificates for linear heads operating on SPCA features.
Our main experiments deliberately use a small non-linear MLP after the projection to test whether the sparsity-driven effect persists beyond linear models.
While the linear-head assumption does not hold exactly there, the same mechanism remains: sparsity in $W$ shrinks dual/operator norms that govern input sensitivity.
For general heads $C_\phi$ with Lipschitz constant $L_C$ in feature space, the composition bound
$\|C_\phi \circ W\|_{p \to 2} \le L_C \, \|W\|_{p \to 2}$
shows that sparsifying $W$ tightens worst-case sensitivity even without linearity.
The empirical sections confirm that the robustness advantage of SPCA over PCA is visible with the small non-linear classifier as well.

\section{Experiments}
\label{sec:exp}

This section describes our experimental setup and reports empirical results comparing PCA‑ and SPCA‑based classifiers under a suite of adversarial attacks.  We evaluate both white‑box attacks (where the attacker has full access to the model and its gradients) and black‑box attacks (where only model outputs can be queried).

\subsection{Datasets and Preprocessing}

\paragraph{MNIST.}  The first dataset is the widely used MNIST handwritten digits benchmark.  It contains 60,000 training images and 10,000 test images of grayscale digits normalized to $28\times 28$ pixels \citep{deng2012mnist}. We follow the conventional split, using the entire training set to learn the PCA or SPCA projection matrix and to train a downstream classifier, and evaluating robustness on the 10,000‑image test set. After flattening each image into a 784‑dimensional vector and mean‑centring, we project onto $r$ principal components. For SPCA we use a sparsity parameter that retains approximately $5\%$ of the weights in each component.

\paragraph{CIFAR‑Binary.}  Our second dataset is a binary version of CIFAR‑10 \citep{krizhevsky2009learningcifar} inspired by \cite{li2019aregenerative}.  CIFAR‑10 contains 60,000 $32\times32$ color images in 10 classes, with 50,000 training examples and 10,000 test examples. To create the binary variant we extract only the classes ``airplane'' and ``frog'' from the original dataset, yielding 6,000 airplane images and 6,000 frog images.  We split these into 10,000 training examples and 2,000 test examples.  Each image is converted to grayscale, flattened into a 1024‑dimensional vector, mean‑centred, and projected onto $r$ principal components with PCA or SPCA.

\subsection{Model Architecture}

To focus on the effect of the feature extractor rather than the classifier capacity, we employ a small but expressive neural network for classification in both settings.  After projecting the input into $r$ dimensions, we feed the result into a fully connected network with two hidden layers of sizes 256 and 128, each followed by ReLU activation.  A final linear layer maps to the class logits.  Models are trained for 20 epochs using the Adam optimizer with learning rate $1\times10^{-3}$ and batch size 128. For each value of $r$ we train separate PCA‑ and SPCA‑based classifiers to ensure a fair comparison. Both classifiers achieve near‑identical accuracy on clean test data.

\subsection{Attack Models}

We evaluate robustness against four adversarial attacks spanning different norms and threat models.

\paragraph{Fast Gradient Sign Method (FGSM).}  FGSM is a one‑step, white‑box attack that perturbs an input in the direction of the sign of the loss gradient \citep{fgsm}. An adversarial example $x'$ is generated by $x' = x + \varepsilon \cdot \operatorname{sign}(\nabla_x \ell(h(x), y))$, where $\varepsilon$ is the attack strength.  Despite its simplicity, FGSM effectively reduces classification accuracy even for very small $\varepsilon$.

\paragraph{Projected Gradient Descent (PGD).}  PGD is a multi‑step extension of FGSM that iteratively takes gradient steps and projects back onto the allowed perturbation set \citep{pgd}. For perturbation budget $\varepsilon$ and step size $\alpha$ we perform $T$ iterations:
\begin{align*}
x^{(0)} &= x + \mathcal{U}[-\varepsilon, \varepsilon],\\
x^{(t+1)} &= \Pi_{x + \mathcal{B}_p(\varepsilon)}\bigl(x^{(t)} + \alpha \cdot \operatorname{sign}(\nabla_{x^{(t)}} \ell(h(x^{(t)}), y))\bigr),
\end{align*}
where $\Pi$ denotes projection onto the $\ell_p$ ball of radius $\varepsilon$ around $x$.  PGD is often considered a universal first‑order adversary since it reliably finds high‑loss points in the inner maximisation problem of adversarial robustness.  In our experiments we run 40 PGD steps with $\alpha = \varepsilon/4$.

\paragraph{Momentum Iterative Method (MIM).}  MIM augments iterative gradient attacks with a momentum term to stabilise updates and improve transferability \citep{mim}. At each iteration the gradient is accumulated into a velocity vector $g^{(t)}$ and normalized.  The update rule is $g^{(t+1)} = \mu \cdot g^{(t)} + \frac{\nabla_{x^{(t)}} \ell(h(x^{(t)}), y)}{\|\nabla_{x^{(t)}} \ell(h(x^{(t)}), y)\|_1}$, and the adversarial example is updated as $x^{(t+1)} = x^{(t)} + \alpha \cdot \operatorname{sign}(g^{(t+1)})$ followed by projection.  We use a momentum parameter $\mu = 1.0$ and step size $\alpha = \varepsilon/5$ for 20 iterations.

\paragraph{Square Attack.}  The Square Attack is a score‑based black‑box attack that queries only the model’s output probabilities and randomly perturbs square regions of the image until misclassification occurs \citep{andriushchenko2020squareattackqueryefficientblackbox}. The method is query‑efficient and circumvents gradient masking by not requiring access to model gradients. We adopt the implementation of \cite{andriushchenko2020squareattackqueryefficientblackbox} with 5,000 queries and report the success rate as a function of $\varepsilon$.

For all attacks we evaluate two norms: the $\ell_\infty$ norm, which constrains the maximum absolute change to any pixel, and the $\ell_2$ norm, which constrains the overall energy of the perturbation.  Attack strength $\varepsilon$ varies from 0.01 to 0.20 in increments of 0.01.

\subsection{Results}

\subsubsection{White‑box Attacks ($\ell_\infty$)}

Figure~\ref{fig:linf_results} summarises accuracy versus attack strength for FGSM, PGD and MIM under $\ell_\infty$ constraints.  Each plot uses dots to indicate accuracy for PCA (dashed lines) and SPCA (solid lines) as the number of retained components varies from 100 to 200; colors encode the number of components.  The top row shows results on MNIST, the bottom row on CIFAR‑Binary.

\begin{figure*}[!ht]
    \centering
    \begin{subfigure}[b]{0.32\linewidth}
        \includegraphics[width=\linewidth]{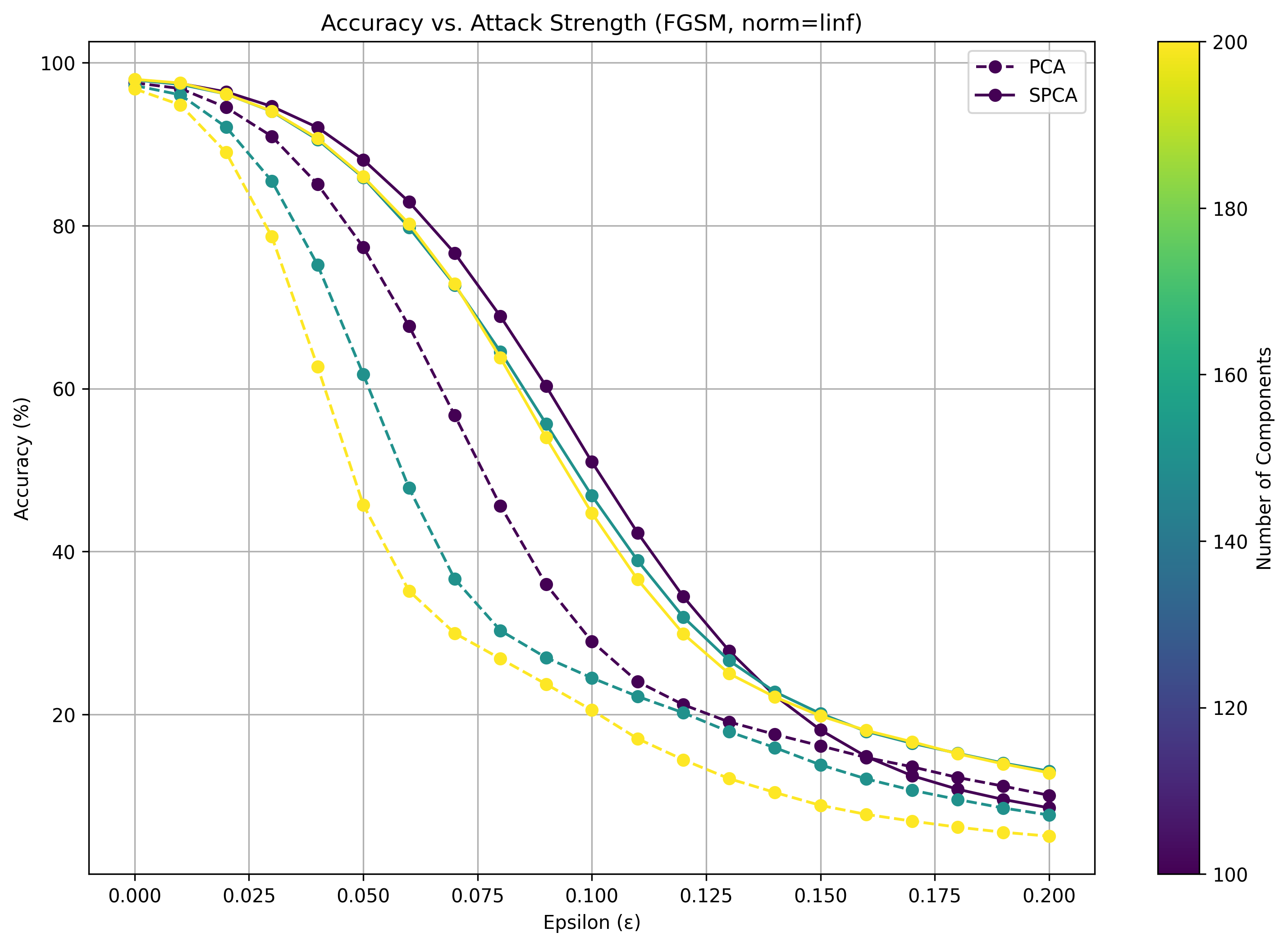}
        \caption{MNIST -- FGSM$_\infty$}
    \end{subfigure}
    \begin{subfigure}[b]{0.32\linewidth}
        \includegraphics[width=\linewidth]{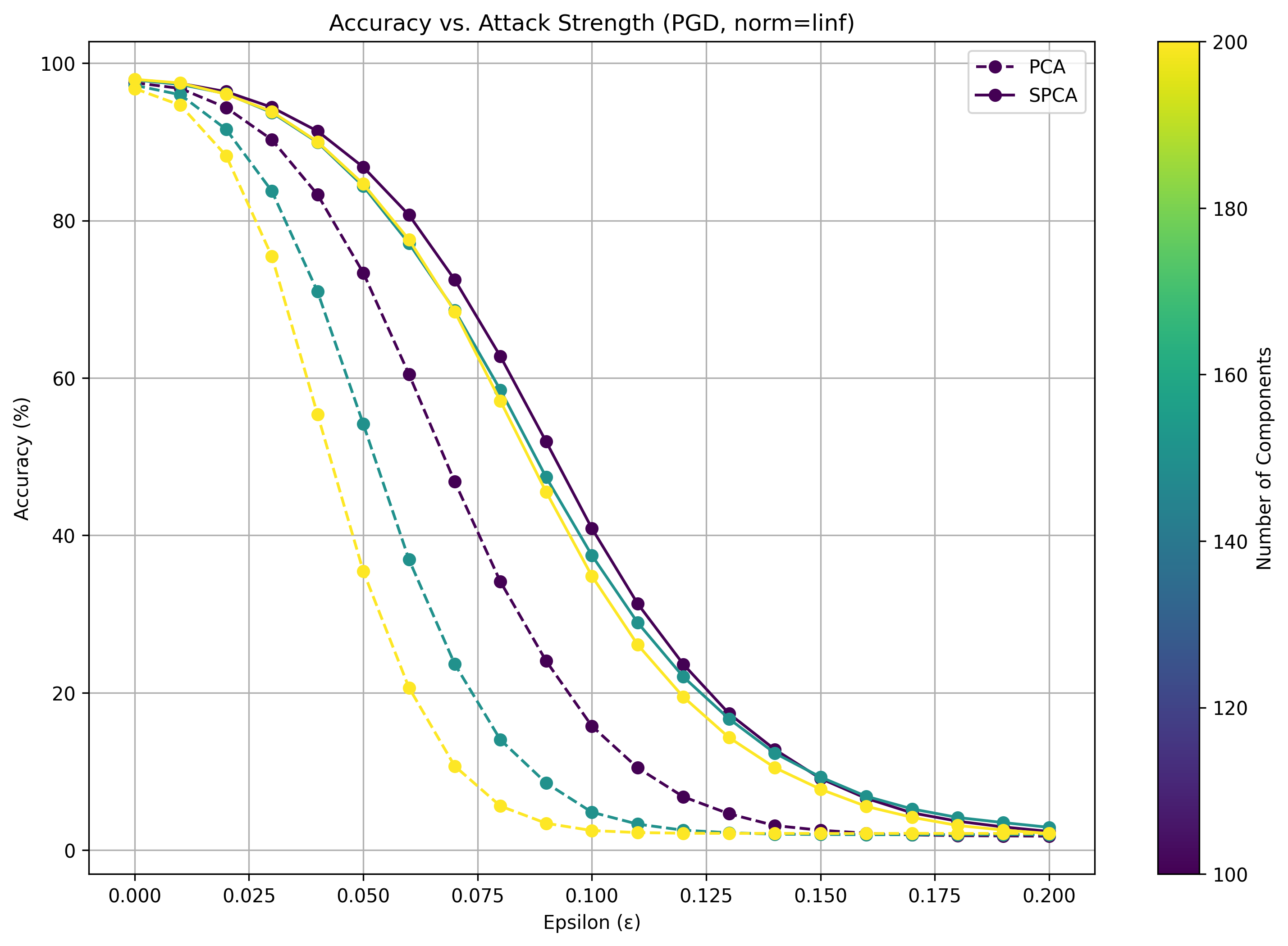}
        \caption{MNIST -- PGD$_\infty$}
    \end{subfigure}
    \begin{subfigure}[b]{0.32\linewidth}
        \includegraphics[width=\linewidth]{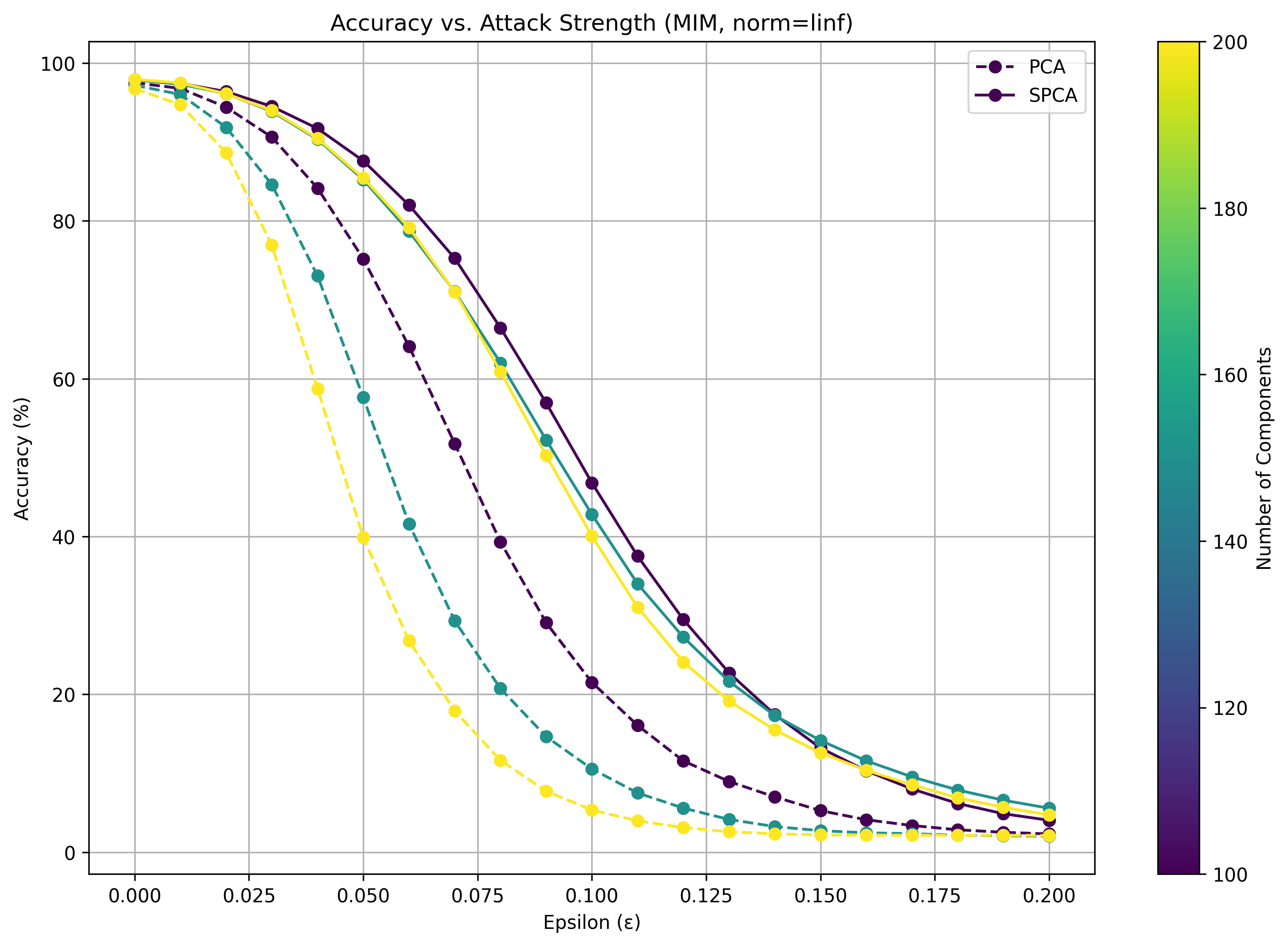}
        \caption{MNIST -- MIM$_\infty$}
    \end{subfigure}
    \\
    \begin{subfigure}[b]{0.32\linewidth}
        \includegraphics[width=\linewidth]{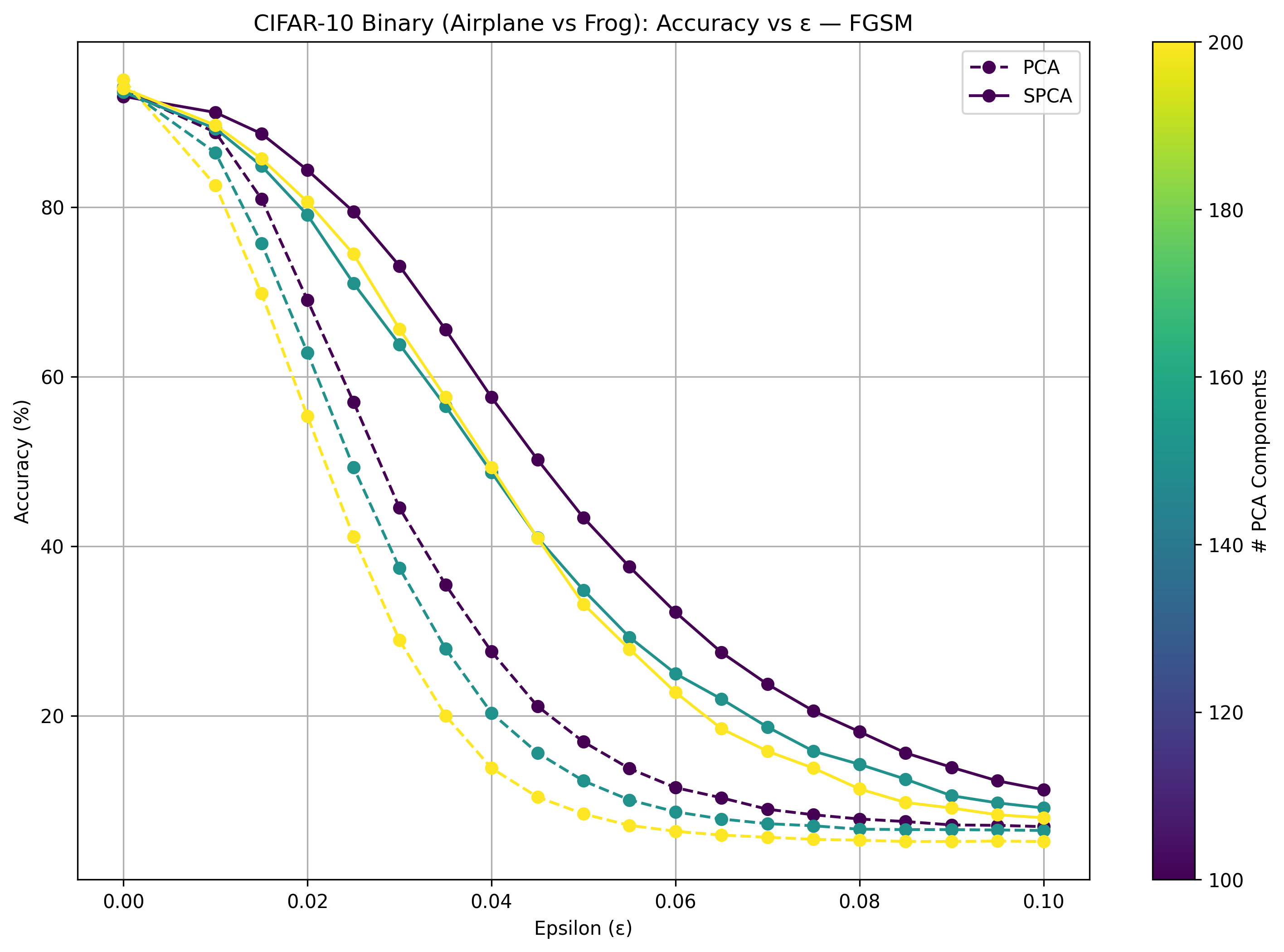}
        \caption{CIFAR‑Binary -- FGSM$_\infty$}
    \end{subfigure}
    \begin{subfigure}[b]{0.32\linewidth}
        \includegraphics[width=\linewidth]{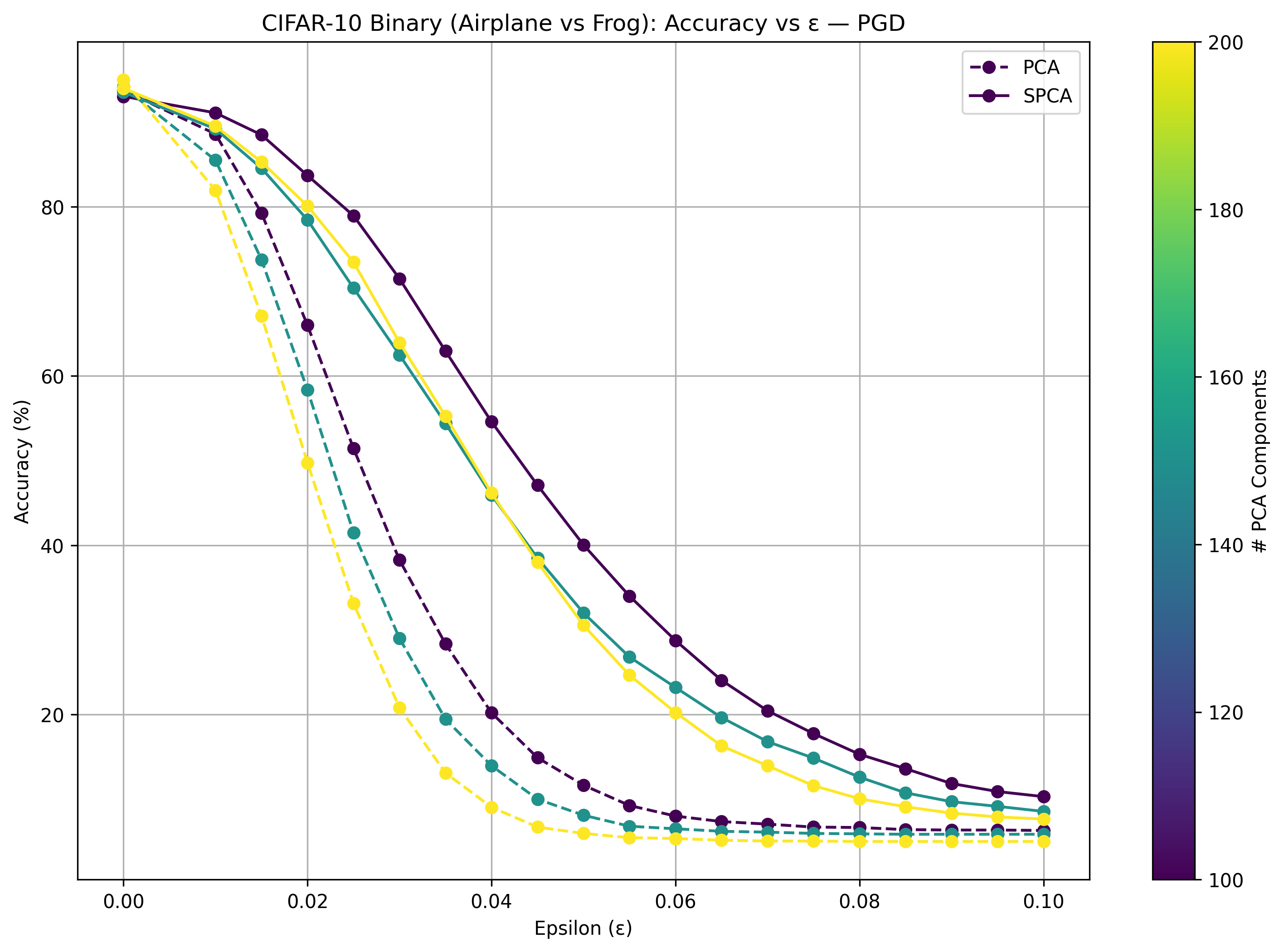}
        \caption{CIFAR‑Binary -- PGD$_\infty$}
    \end{subfigure}
    \begin{subfigure}[b]{0.32\linewidth}
        \includegraphics[width=\linewidth]{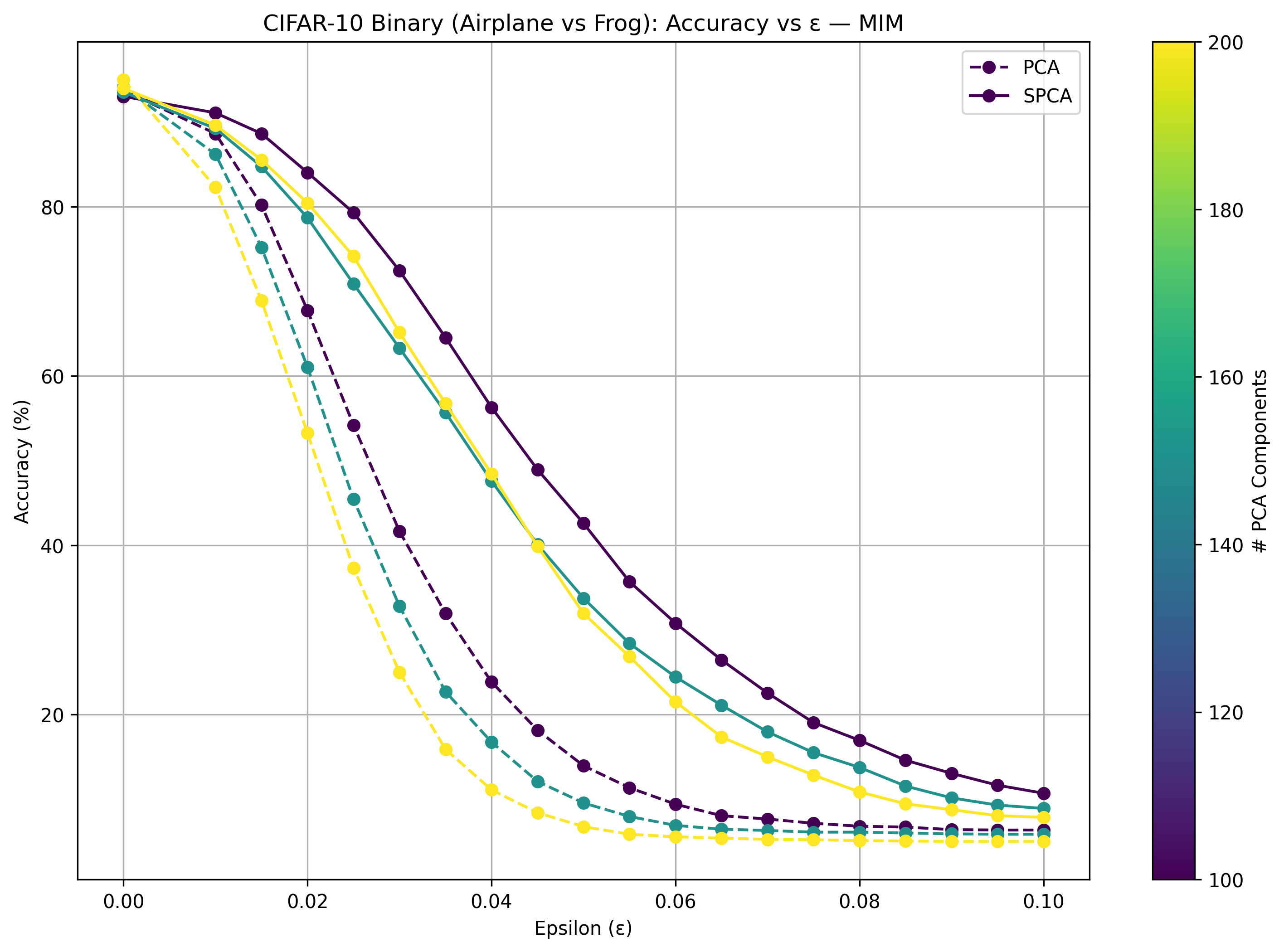}
        \caption{CIFAR‑Binary -- MIM$_\infty$}
    \end{subfigure}
    \caption{Classification accuracy of PCA‑ and SPCA‑based classifiers under white‑box attacks with $\ell_\infty$ perturbations.  Solid lines correspond to SPCA and dashed lines to PCA.  colors indicate the number of retained components (100-200).}
    \label{fig:linf_results}
\end{figure*}

On MNIST we observe that SPCA consistently outperforms PCA across all attacks and perturbation levels.  For example, under FGSM at $\varepsilon=0.1$ the PCA classifier’s accuracy drops below 80\% whereas the SPCA classifier remains above 95\%.  The gap widens with stronger attacks; at $\varepsilon=0.2$ SPCA still achieves around 70\% accuracy whereas PCA collapses near random chance.  This trend is even more pronounced for the iterative attacks PGD and MIM, reflecting the cumulative effect of repeated gradient steps.

The CIFAR‑Binary results exhibit similar patterns but with lower overall accuracy due to the greater complexity of natural images.  SPCA maintains a substantial advantage: accuracy curves decline more slowly and stay well above those of PCA for moderate $\varepsilon$.  We also note that increasing the number of retained components (shown by the color gradient) improves clean accuracy but slightly reduces robustness; however SPCA is less sensitive to this trade‑off than PCA.

\subsubsection{White‑box Attacks ($\ell_2$)}

Figure~\ref{fig:l2_results} shows analogous results for FGSM, PGD and MIM under $\ell_2$ constraints.  The overall shape of the curves differs from the $\ell_\infty$ case: accuracies remain high up to moderate values of $\varepsilon$ and decline more gradually.  On MNIST, SPCA and PCA perform similarly for small perturbations but diverge as $\varepsilon$ increases; the decrease in SPCA accuracy is slower, particularly under MIM.  On CIFAR‑Binary the distinction is clearer: SPCA retains high accuracy well beyond the point where PCA begins to degrade, confirming that sparsity improves robustness even for energy‑bounded perturbations.

\begin{figure*}[!ht]
    \centering
    \begin{subfigure}[b]{0.32\linewidth}
        \includegraphics[width=\linewidth]{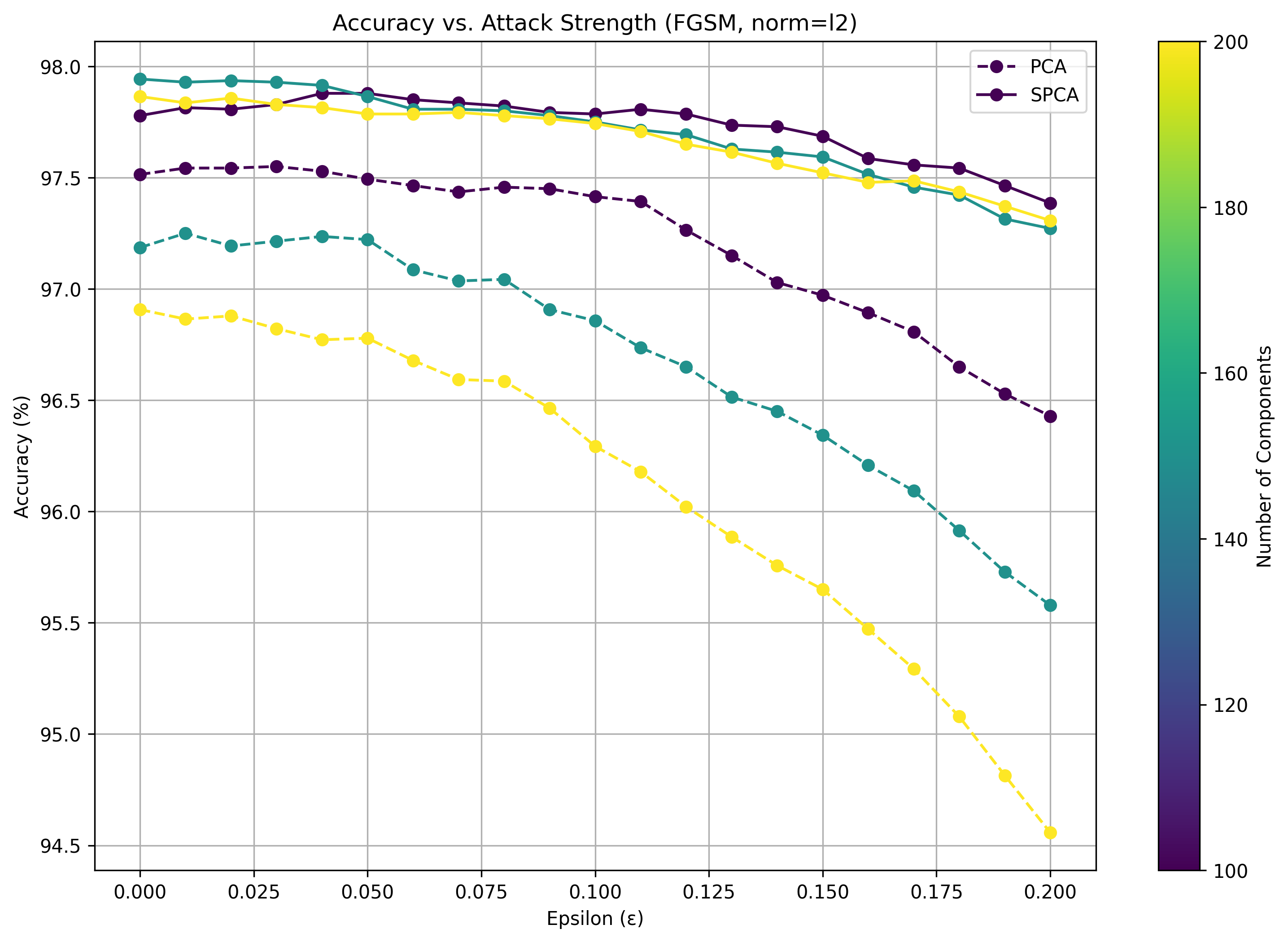}
        \caption{MNIST -- FGSM$_2$}
    \end{subfigure}
    \begin{subfigure}[b]{0.32\linewidth}
        \includegraphics[width=\linewidth]{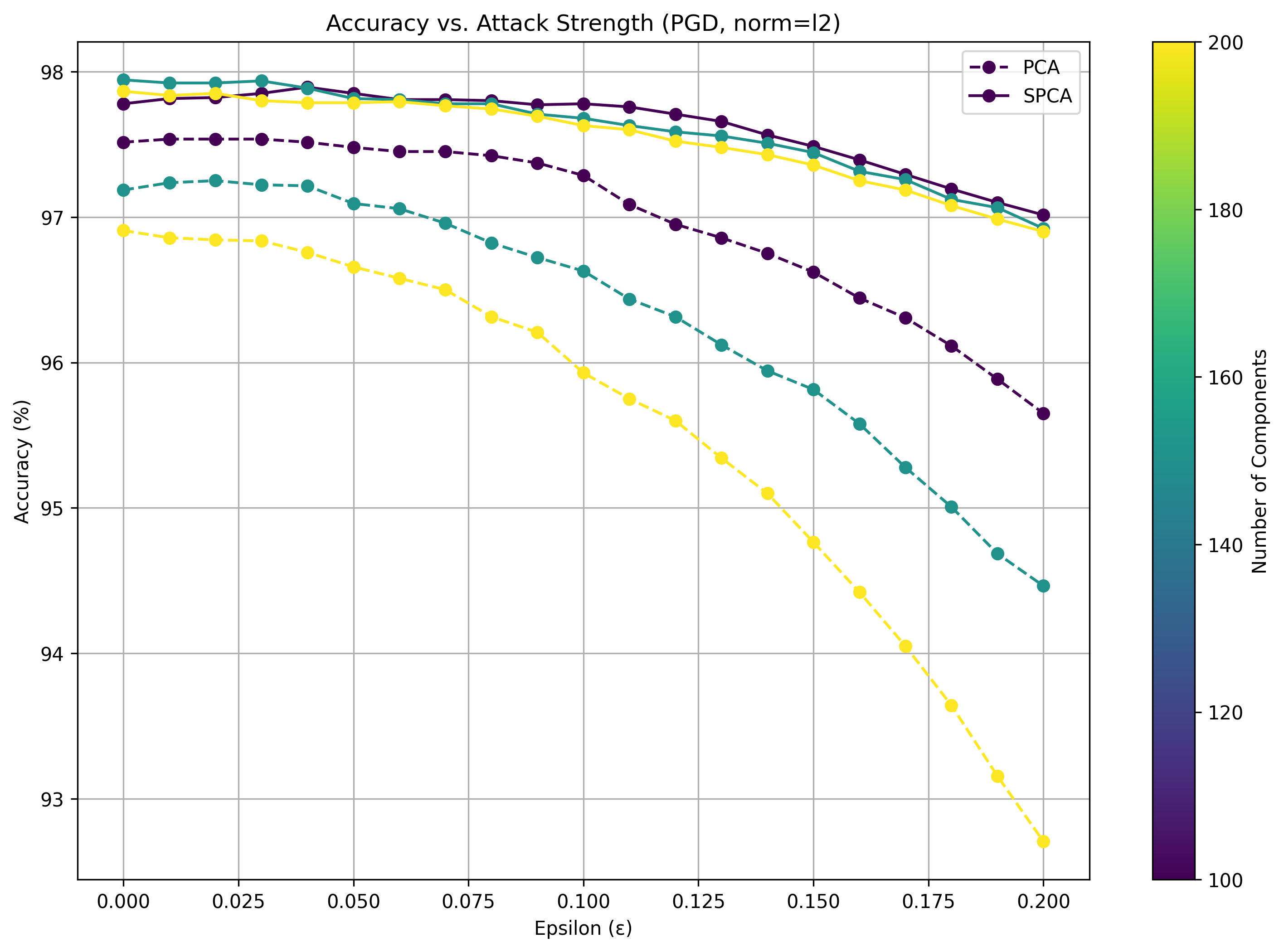}
        \caption{MNIST -- PGD$_2$}
    \end{subfigure}
    \begin{subfigure}[b]{0.32\linewidth}
        \includegraphics[width=\linewidth]{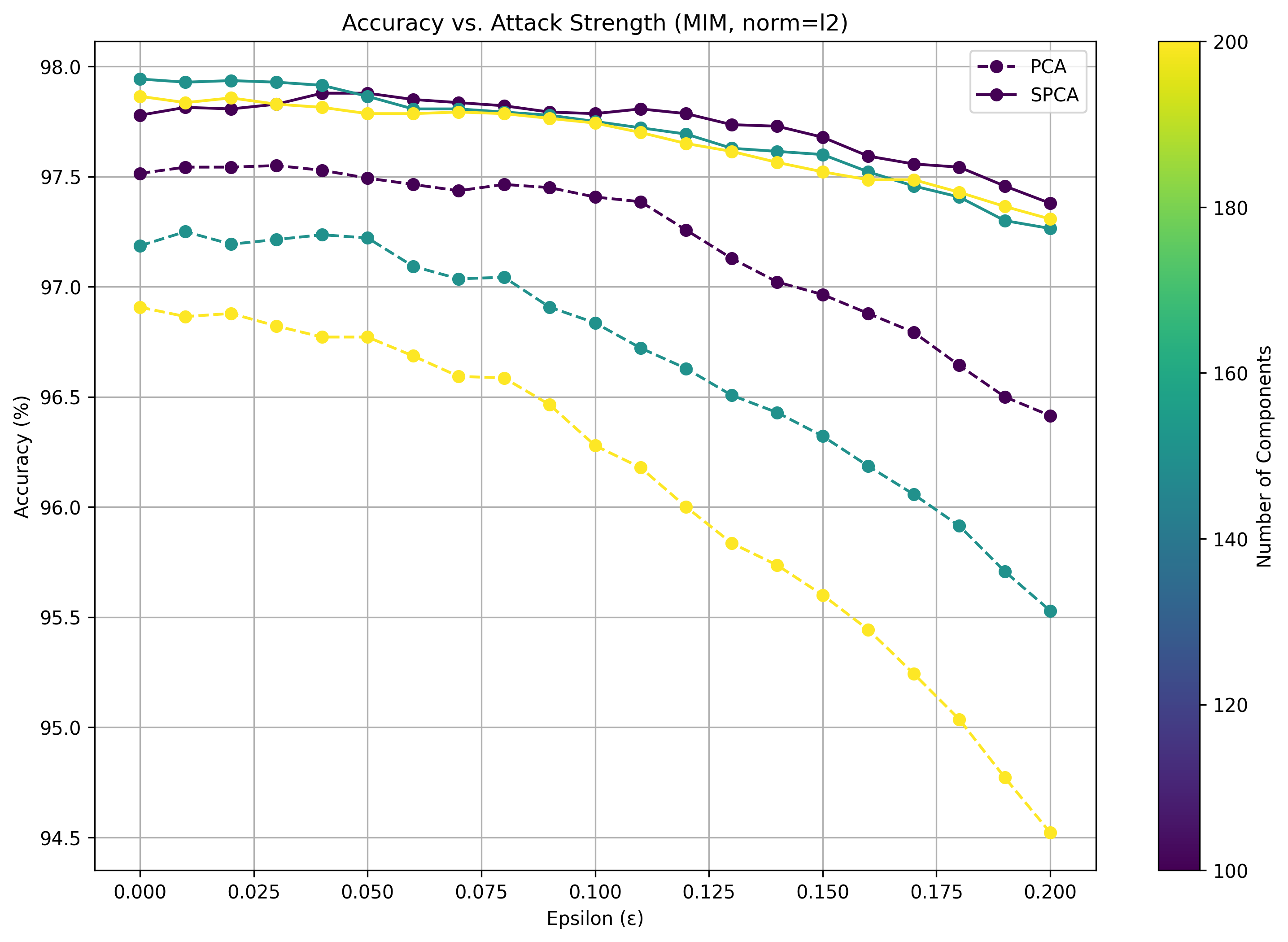}
        \caption{MNIST -- MIM$_2$}
    \end{subfigure}
    \\
    \begin{subfigure}[b]{0.32\linewidth}
        \includegraphics[width=\linewidth]{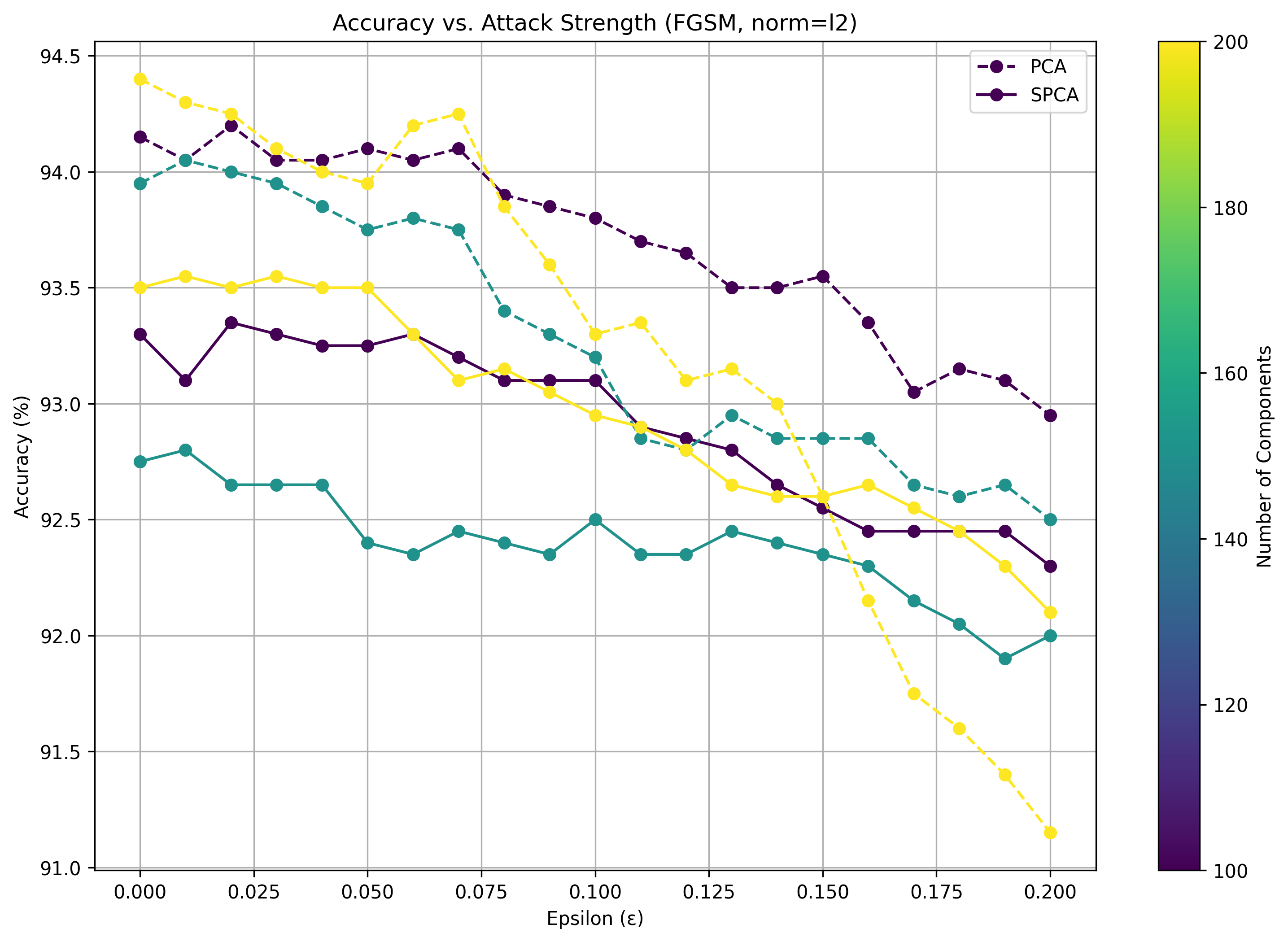}
        \caption{CIFAR‑Binary -- FGSM$_2$}
    \end{subfigure}
    \begin{subfigure}[b]{0.32\linewidth}
        \includegraphics[width=\linewidth]{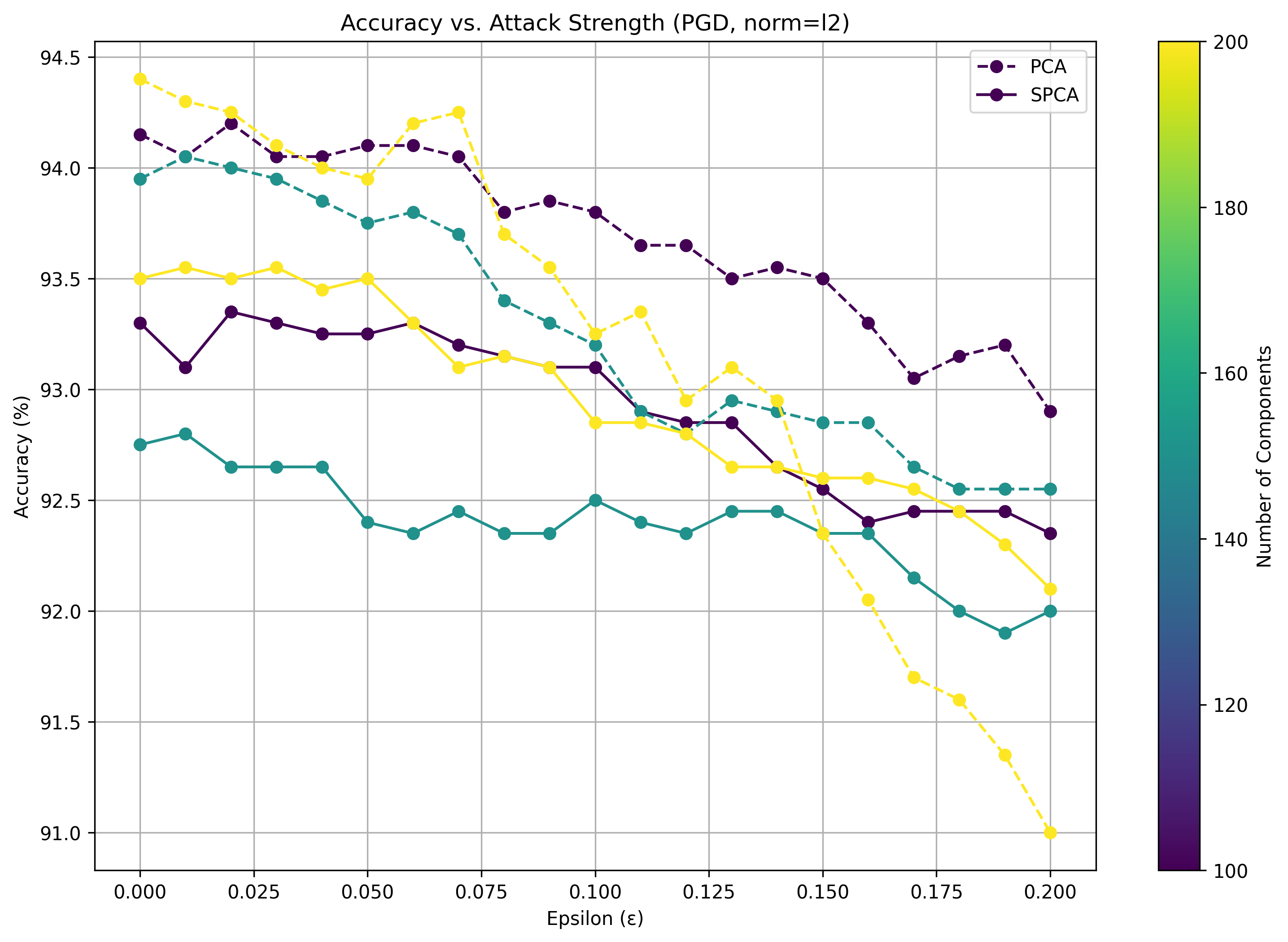}
        \caption{CIFAR‑Binary -- PGD$_2$}
    \end{subfigure}
    \begin{subfigure}[b]{0.32\linewidth}
        \includegraphics[width=\linewidth]{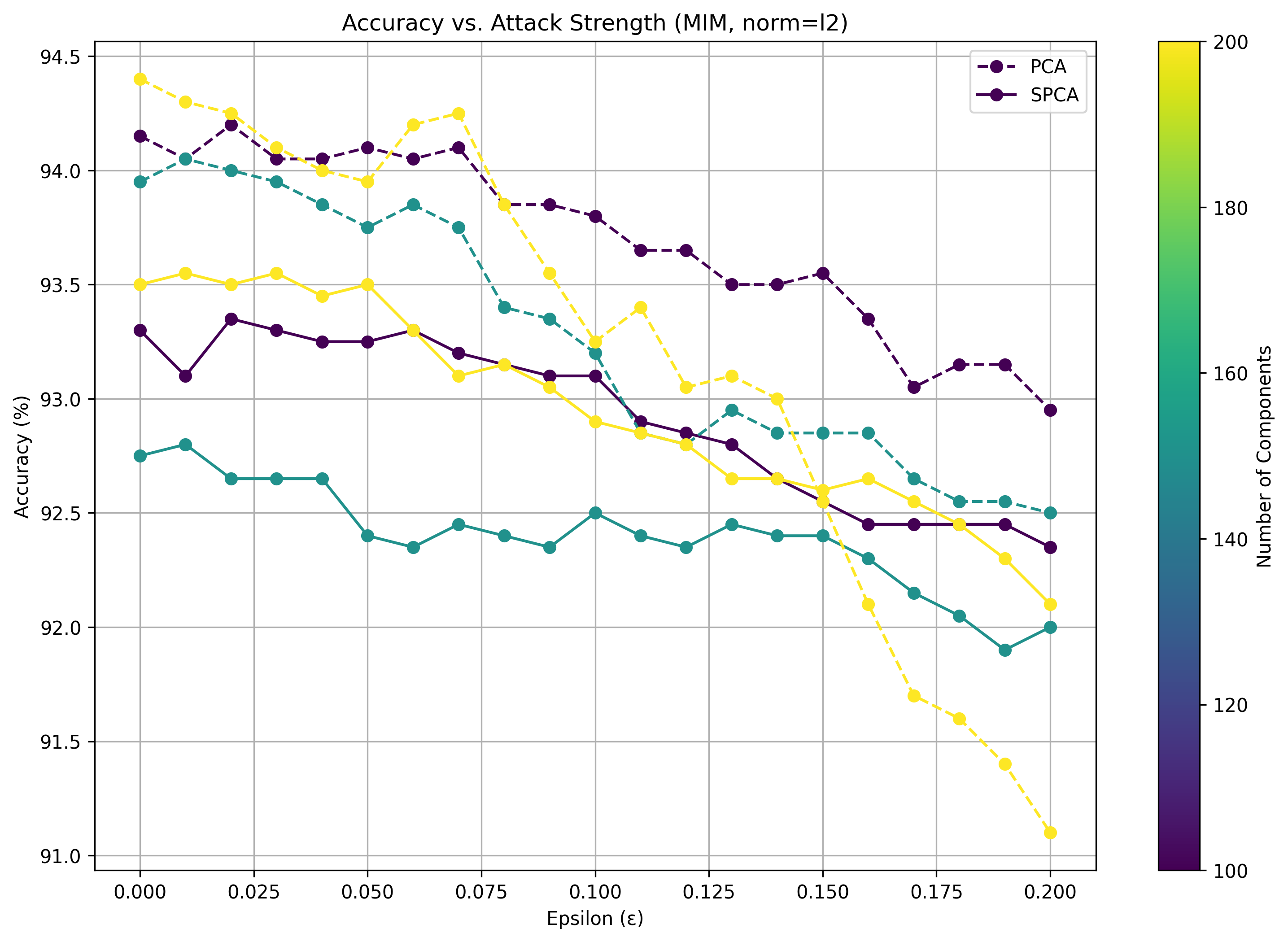}
        \caption{CIFAR‑Binary -- MIM$_2$}
    \end{subfigure}
    \caption{Classification accuracy of PCA‑ and SPCA‑based classifiers under white‑box attacks with $\ell_2$ perturbations.  Solid lines correspond to SPCA and dashed lines to PCA; colors indicate the number of retained components.}
    \label{fig:l2_results}
\end{figure*}

\subsubsection{Black‑box Attack}

Figure~\ref{fig:square_attack} reports results for the score‑based Square Attack on CIFAR‑Binary. SPCA maintains a slight advantage over PCA across the range of $\varepsilon$, with its accuracy decreasing less fast, especially for small $\varepsilon$. These results suggest that sparsity also benefits robustness against query‑efficient black‑box attacks.

\begin{figure}[!ht]
    \centering
    \begin{subfigure}[b]{0.49\linewidth}
        \includegraphics[width=\linewidth]{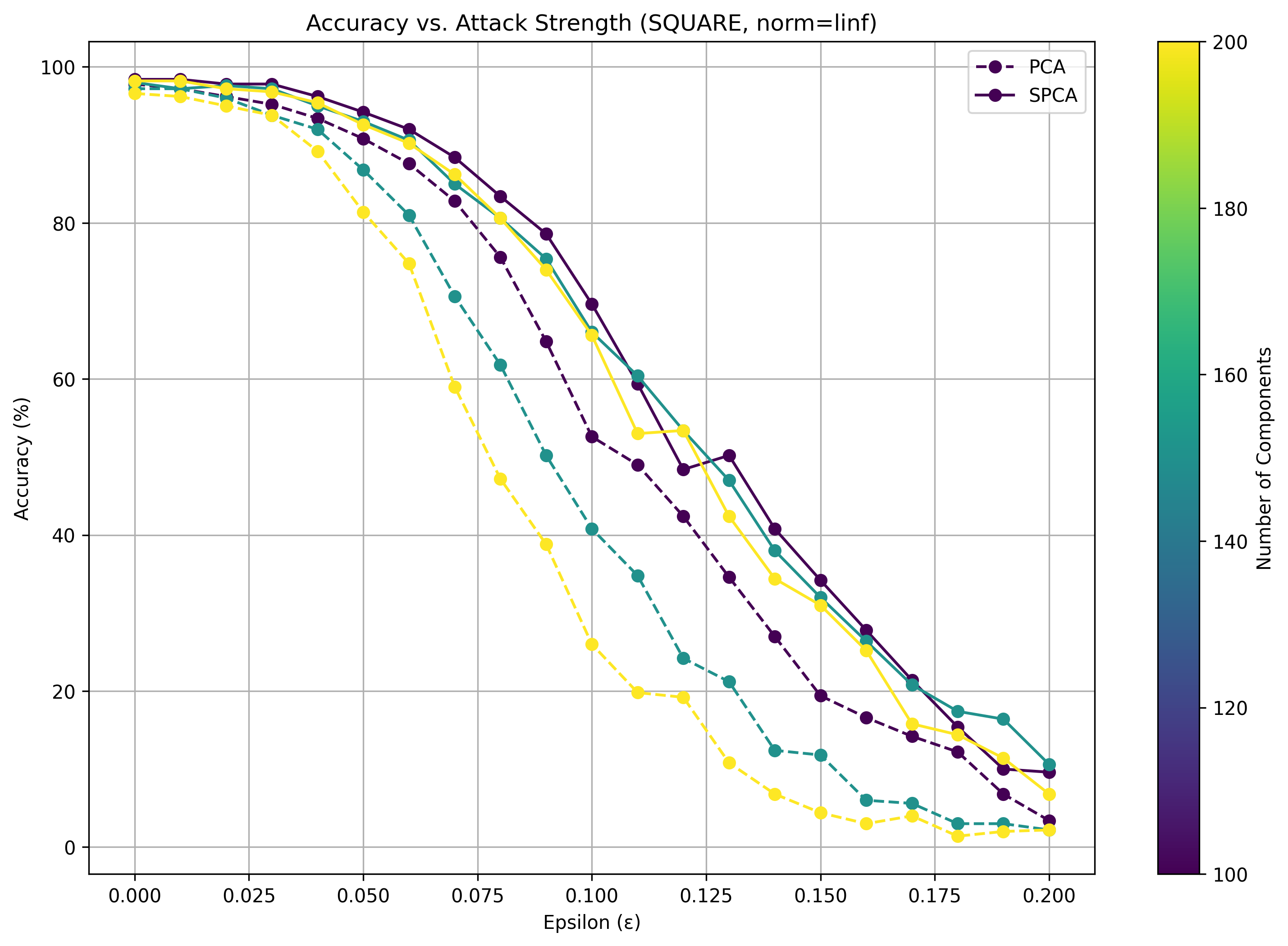}
        \caption{MNIST -- Square$_\infty$}
    \end{subfigure}
    \begin{subfigure}[b]{0.49\linewidth}
        \includegraphics[width=\linewidth]{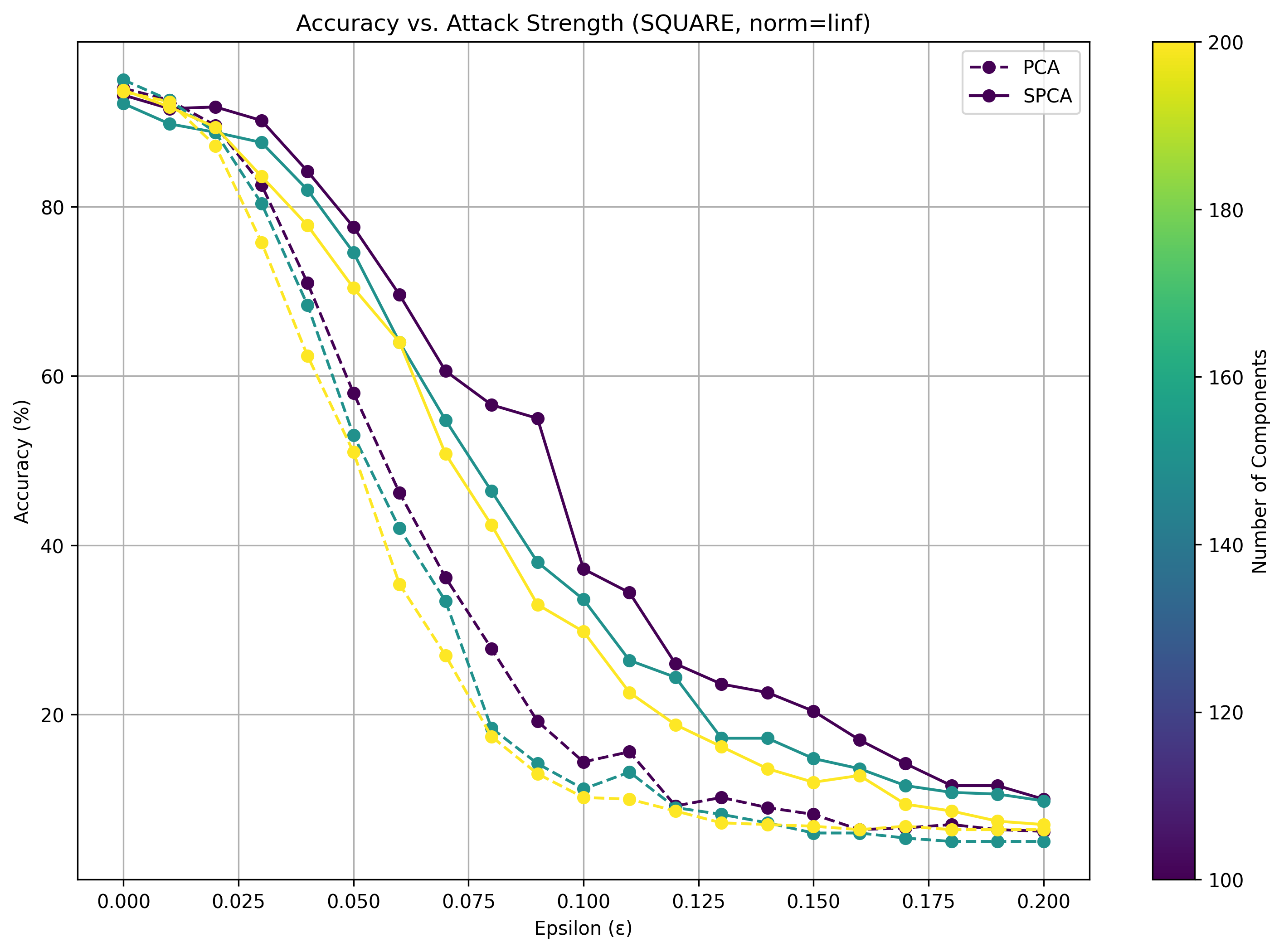}
        \caption{CIFAR-binary -- Square$_\infty$}
    \end{subfigure}
    \caption{Classification accuracy on MNIST and CIFAR‑Binary under the Square Attack. Solid lines correspond to SPCA; dashed lines to PCA; colors denote the number of retained components.}
    \label{fig:square_attack}
\end{figure}

\subsubsection{Visual Inspection of Adversarial Examples}

To better understand the nature of the perturbations, Figure~\ref{fig:adversarial_examples} displays examples of clean and adversarial images for both datasets at increasing $\varepsilon$. In the white‑box $\ell_\infty$ setting, perturbations manifest as high‑frequency noise distributed across the entire image; SPCA appears to mitigate their effect by discarding the corresponding high‑variance directions. In all cases, the perceptual quality of adversarial examples remains high even when the classifier is fooled.

\begin{figure}[!ht]
    \centering
    \includegraphics[width=\linewidth]{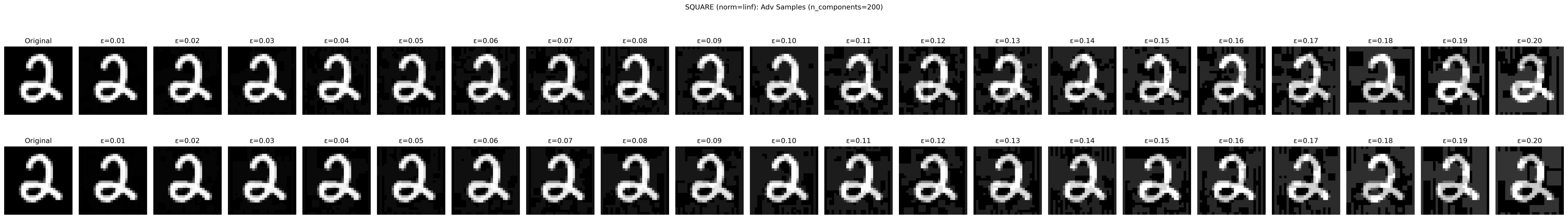}
    \includegraphics[width=\linewidth]{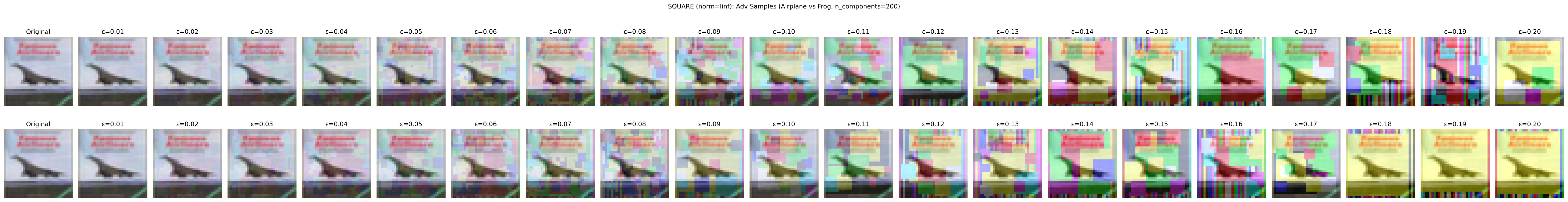}
    \includegraphics[width=\linewidth]{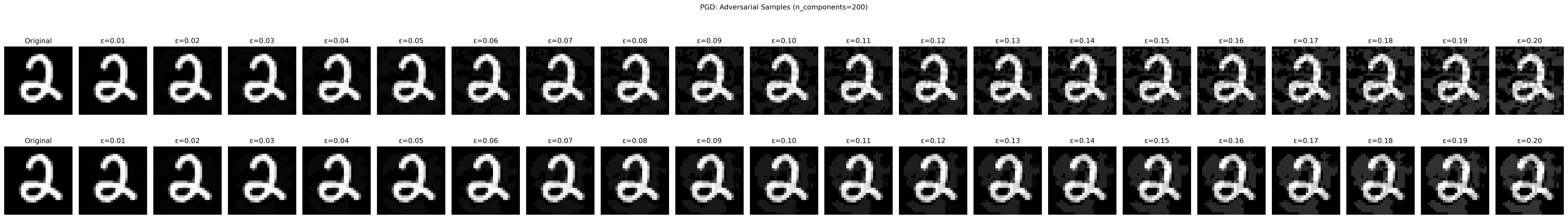}
    \includegraphics[width=\linewidth]{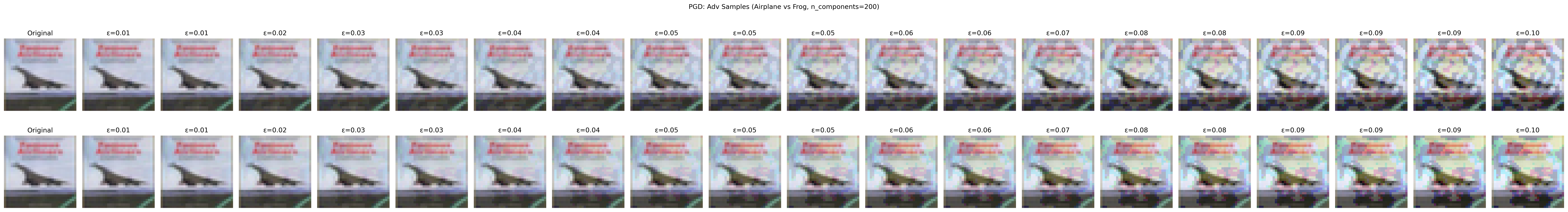}
    \includegraphics[width=\linewidth]{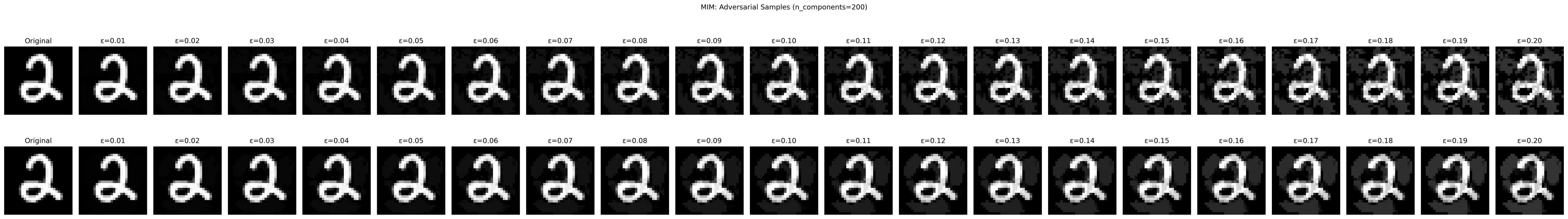}
    \includegraphics[width=\linewidth]{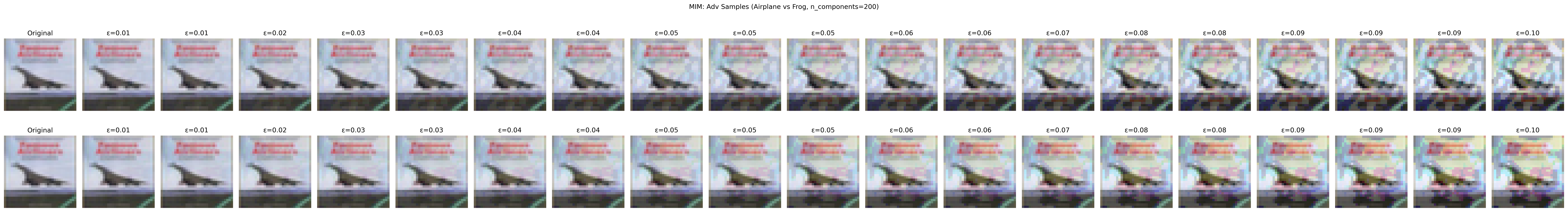}
    \caption{Representative clean (left columns) and adversarial (right columns) images for MNIST and CIFAR‑Binary across increasing $\varepsilon$. Rows correspond to different attack types. We visualize examples for models with 200 principal components. Best viewed zoomed in.}
    \label{fig:adversarial_examples}
\end{figure}

\section{Discussion}
Our experiments reveal several consistent trends. First, across datasets and attack types, SPCA-based classifiers uniformly outperform PCA-based ones. The advantage is most pronounced for $\ell_\infty$ attacks, where SPCA maintains high accuracy at perturbation levels that cause PCA to collapse; robustness under $\ell_2$ attacks declines more gradually for both methods, yet SPCA still provides a noticeable benefit, especially on CIFAR-binary. Second, the number of retained components mediates a trade-off between clean accuracy and robustness: retaining more components improves baseline accuracy but exposes the classifier to more potentially non-robust features. SPCA mitigates this trade-off, delivering stronger robustness at comparable or better clean accuracy for a fixed $r$. Finally, black-box results indicate that sparsity also helps when gradients are not available to the attacker, though the gap is smaller than in the white-box setting.

These empirical trends align closely with the theoretical mechanism established earlier. For linear heads, our certificates show that the per-example robust radius scales like a margin divided by a dual norm of $W^\top u$ (either $\ell_1$ or $\ell_2$). Enforcing sparsity in $W$ reduces column norms and drives many entries to zero, directly shrinking these dual norms and increasing the certified radius. For the small non-linear heads used in our experiments, the Lipschitz composition $\|C_\phi \circ W\|_{p\to 2} \le L_C \|W\|_{p\to 2}$ predicts the same direction of effect: sparser $W$ lowers operator-norm bounds and thus the model’s worst-case sensitivity, which is exactly what we observe empirically.

Scope and limitations deserve emphasis. The exact certificates apply to linear heads; while we verified the same robustness pattern with a small non-linear network, the non-linear guarantees are necessarily more conservative and come via Lipschitz bounds. The theory is margin-conditional: sparsifying $W$ improves the certified radius for a given margin but does not by itself guarantee larger margins. In practice, we observed that margins remain competitive so that the net effect is positive. Finally, our evaluation focuses on moderate projection sizes ($r \le 200$) and compact architectures; larger-scale settings may require additional optimisation care for SPCA.

\section{Conclusion}
\label{sec:conclusion}
This paper demonstrates that enforcing sparsity in linear feature extraction significantly improves the adversarial robustness of neural network classifiers. We introduced a theoretical framework that provides exact robustness certificates for linear heads on top of SPCA features in both $\ell_\infty$ and $\ell_2$ settings (binary and multiclass), and a complementary Lipschitz composition argument for general non-linear heads. The key mechanism is explicit: sparsity in the projection reduces dual/operator norms that control worst-case changes, thereby enlarging certified radii or tightening sensitivity bounds. Empirically, with a small non-linear network after the projection, SPCA consistently outperforms PCA across white-box and black-box attacks while maintaining competitive clean accuracy, confirming that the mechanism persists beyond the linear regime.

Several avenues remain for future work. Extending SPCA to higher-dimensional projections and larger architectures may require improved optimisation and structured sparsity. Combining sparse projections with adversarial training or certified methods (e.g., smoothing) could yield additive gains. It is also natural to explore adaptive sparsity patterns learned jointly with the classifier, and to evaluate on broader datasets and tasks. More broadly, our results suggest that shaping the front-end representation to contract adversarially exploitable directions is a principled and practical component for building robust systems.


\acks{The authors conducted this research independently and did not receive any specific funding from public, commercial, or not-for-profit agencies.\\
The authors declare that they have no financial support or funding related to this work.
They also declare that they have no competing interests or other financial relationships that could be perceived to influence the results or interpretation of this article.\\
Further information about JMLR’s disclosure policies can be found on the journal’s website}


\newpage









\vskip 0.2in
\bibliography{main}

\end{document}